\title{Adversarial Causal Bayesian Optimization}
\author{
        \hspace{-1em}Scott Sussex \\
        \hspace{-1em}ETH Z{ü}rich \\     
 \hspace{-1em}\texttt{\small scott.sussex@inf.ethz.ch}
    \And 
        \hspace{-4em} Pier Giuseppe Sessa \\
        \hspace{-4em} ETH Z{ü}rich  
   \And      
        \hspace{-1em} Anastasiia Makarova \\
        \hspace{-1em} ETH Z{ü}rich  
  \And 
      Andreas Krause \\
      ETH Z{ü}rich 
}
\begin{document}

\maketitle\begin{abstract}
In Causal Bayesian Optimization (CBO), an agent intervenes on an unknown structural causal model to maximize a downstream reward variable. In this paper, we consider the generalization where 
other agents or external events also intervene on the system,
which is key for enabling adaptiveness to non-stationarities 
such as weather changes, market forces, or adversaries. We formalize this generalization of CBO as {\em Adversarial Causal Bayesian Optimization (\acbo)} and introduce the first algorithm for \acbo with bounded regret: {\em Causal Bayesian Optimization with Multiplicative Weights} (\alg). Our approach combines a classical online learning strategy with causal modeling of the rewards. To achieve this, it computes optimistic counterfactual reward estimates by propagating uncertainty through the causal graph. 
We derive regret bounds for \alg that naturally depend on graph-related quantities. We further propose a scalable implementation for the case of combinatorial interventions and submodular rewards.
Empirically, \alg outperforms non-causal and non-adversarial Bayesian optimization methods on synthetic environments and environments based on real-word data. Our experiments include a realistic demonstration of how \alg can be used to learn users' demand patterns in a shared mobility system and reposition vehicles in strategic areas.\looseness=-1   
\end{abstract}

\section{Introduction}

How can a scientist efficiently optimize an unknown function that is expensive to evaluate? This problem arises in automated machine learning, drug discovery and agriculture. \emph{Bayesian optimization} (BO) encompasses an array of algorithms for sequentially optimizing unknown functions \citep{movckus1975}. Classical BO algorithms treat the unknown function mostly as a black box and make minimal structural assumptions. By incorporating more domain knowledge about the unknown function into the algorithm, one can hope to optimize the function using fewer evaluations. 

A recent line of work on causal Bayesian optimization (CBO) \citep{agliettiCausalBayesianOptimization2020} attempts to integrate use of a structural causal model (SCM) into BO methods. It is assumed that actions are interventions on some set of observed variables, which are causally related to each other and a reward variable through a known causal graph (\cref{fig:overview}b), but unknown mechanisms. Many important BO problems might take such a shape.  
For example, managing supply in a Shared Mobility System (SMS) involves intervening on the distribution of bikes and scooters across the city (we study this further in our experiments).  
Importantly, \citet{sussex2022model} show that a BO approach leveraging the additional structure of CBO can achieve exponentially lower regret in terms of the action space size.\looseness=-1 

Most CBO methods to date assume that the system is completely stationary across interactions and that only one agent interacts with the system. However, in several examples it would be desirable to incorporate the influence of external events. For example, the effect of changing weather in crop management, demand patterns in a SMS, or the actions of other agents in multi-agent settings. In all such cases, we would like an algorithm that \emph{adapts} to the variability in these external events. 

In this work, we incorporate external events into CBO by introducing a novel \emph{adversarial} CBO (ACBO) setup, illustrated in \cref{fig:overview}c. Crucially, in ACBO the downstream reward is explicitly influenced by adversarial interventions on certain nodes in the causal graph (identified using dashed nodes in~\cref{fig:overview}c) that can only be observed a-posteriori. For this general setting, we propose a novel algorithm -- CBO with multiplicative weights (\alg) -- and prove a regret guarantee using a stronger (but natural) notion of regret than the one used in existing CBO works. For settings where the number of intervention targets is large, we propose a distributed version of \alg which is computationally efficient and can achieve approximate regret guarantees under some additional submodularity assumptions on the reward.
Finally, we find empirical evidence that \alg outperforms relevant prior work in adversarial versions of previously studied CBO benchmarks and in learning to re-balance units in an SMS simulation based upon real data. \looseness=-1

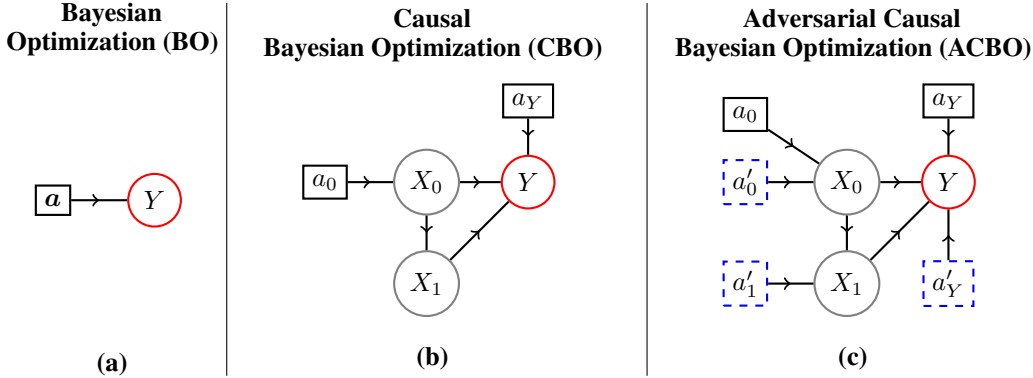
\begin{figure*}[t]
\minipage{0.2\textwidth}
\centering
\textbf{Bayesian Optimization (BO)}
\vskip 0.6in
\begin{tikzpicture}[scale = 0.9, obs/.style={circle, draw=black!100, fill=black!0, thick, minimum size=7mm},
reward/.style={circle, draw=red!100, fill=black!0, thick, minimum size=7mm},
dotarget/.style={circle, dashed, draw=black!100, fill=black!0, thick, minimum size=7mm},
square/.style={rectangle, draw=black!100, fill=black!0, thick, minimum size=2mm},
]
\node[square] (a2) at (0, 0) {$\bm a$};

\node[reward] (d1) at (1.5, 0) {$Y$};

\draw[->-, thick] (a2) -- (d1);

\end{tikzpicture}
\centering
\vskip 0.6in
\textbf{(a)}
\endminipage\hfill \vline
\minipage{0.39\textwidth}
\centering
\textbf{Causal \\ Bayesian Optimization (CBO)}
\vskip 0.1in
\begin{tikzpicture}[scale = 0.9, obs/.style={circle, draw=gray!100, fill=black!0, thick, minimum size=7mm},
reward/.style={circle, draw=red!100, fill=black!0, thick, minimum size=7mm},
dotarget/.style={circle, dashed, draw=black!100, fill=black!0, thick, minimum size=7mm},
square/.style={rectangle, draw=black!100, fill=black!0, thick, minimum size=2mm},
]

\node[square] (a0) at (3, 1.2) {$ a_{Y}$};
\node[square] (a1) at (0, 0) {$a_0$};
\node[obs] (d2) at (1.5, 0) {$X_0$};
\node[obs] (d3) at (1.5, -1.5) {$X_1$};
\node[reward] (d4) at (3, 0) {$Y$};

\draw[->-, thick] (a0) -- (d4);
\draw[->-, thick] (a1) -- (d2);
\draw[->-, thick] (d2) -- (d3);
\draw[->-, thick] (d3) -- (d4);
\draw[->-, thick] (d2) -- (d4);
\end{tikzpicture}
\centering
\vskip 0.1in
\textbf{(b)}
\endminipage\hfill \vline
\minipage{0.39\textwidth}
\centering
\textbf{Adversarial Causal \\ Bayesian Optimization (ACBO)}
\vskip 0.1in
\begin{tikzpicture}[scale = 0.9, obs/.style={circle, draw=gray!100, fill=black!0, thick, minimum size=7mm},
reward/.style={circle, draw=red!100, fill=black!0, thick, minimum size=7mm},
dotarget/.style={circle, dashed, draw=black!100, fill=black!0, thick, minimum size=7mm},
square/.style={rectangle, draw=black!100, fill=black!0, thick, minimum size=2mm},
adversary/.style={rectangle, dashed, draw=blue!100, fill=black!0, thick, minimum size=2mm},
]

\node[square] (a0) at (3, 1.2) {$ a_{Y}$};
\node[square] (a1) at (0, 1.0) {$a_0$};
\node[obs] (d2) at (1.5, 0) {$X_0$};
\node[obs] (d3) at (1.5, -1.5) {$X_1$};
\node[reward] (d4) at (3, 0) {$Y$};

\node[adversary] (ay') at (3, -1.5) {$ a_Y'$};
\node[adversary] (a3') at (0, -1.5) {$ a_1'$};
\node[adversary] (a2') at (0, -0) {$ a_0'$};

\draw[->-, thick] (a0) -- (d4);
\draw[->-, thick] (a1) -- (d2);
\draw[->-, thick] (a2') -- (d2);
\draw[->-, thick] (d2) -- (d3);
\draw[->-, thick] (d3) -- (d4);
\draw[->-, thick] (d2) -- (d4);
\draw[->-, thick] (ay') -- (d4);
\draw[->-, thick] (a3') -- (d3);
\end{tikzpicture}
\centering
\vskip 0.1in
\textbf{(c)}
\endminipage \hfill 

\begin{center}
\caption{\looseness -1  Visualizations of Bayesian Optimization (BO), Causal BO (CBO), and the Adversarial CBO (ACBO) introduced in this work. Agents must select actions $\a$ that maximize rewards $Y$.\looseness-1 \\
\textbf{(a)} Standard BO assumes the simplest DAG from actions to reward, regardless of the problem structure. \textbf{(b)} CBO incorporates side observations, e.g., $X_0, X_1$ and causal domain knowledge. 
Each observation could be modeled with a separate model. 
\textbf{(c)} In ACBO (this work), we additionally model the impact of \emph{external events} (weather, perturbations, other players' actions, etc.) \emph{that cannot be controlled}, but can only be observed a-posteriori.
These are depicted as dashed blue nodes and could directly affect the reward node, like $a'_Y$, or indirectly affect it by perturbing upstream variables, like $a'_0, a'_1$.
}

\label{fig:overview}
\end{center}
\vskip -0.35in
\end{figure*}

\section{Background and Problem Statement}
\label{sec:setup}
We consider the problem of an agent interacting with an SCM for $T$ rounds in order to maximize the value of a reward variable. We start by introducing SCMs, the soft intervention model used in this work, and then define the adversarial sequential decision-making problem we study. In the following, we denote with $[m]$ the set of integers $\{0, \dots, m\}$. \looseness-1

\paragraph{Structural Causal Models}
Our SCM is described by a tuple $\langle \G,  Y, \bX, \fs, \snoiserv \rangle$ of the following elements: $\G$ is a \emph{known} DAG; $Y$ is the reward variable; $\bX = {\{X_i\}_{i=0}^{m-1}}$ is a set of observed scalar random variables; the set $\fs = \{\fofi\}_{i=0}^m$ defines the \emph{unknown} functional relations between these variables; and $\snoiserv = \{\snoiserv_i \}_{i=0}^{m}$ is a set of independent noise variables with zero-mean and known distribution. 
 We use the notation $Y$ and $X_m$ interchangeably and assume the elements of $\bX$ are topologically ordered, i.e., $X_0$ is a root and $X_m$ is a leaf.  We denote with $\pa_i \subset \{0, \dots, m\}$ the indices of the parents of the $i$th node, and use the notation $\bZi = \{ X_j\}_{j \in \pa_i}$ for the parents this node. We sometimes use $X_i$ to refer to both the $i$th node and the $i$th random variable. \looseness-1\looseness-1

Each $X_i$ is generated according to the function $\fofi: \calZ_i \rightarrow \calX_i$, taking the parent nodes $\bZi$ of $X_i$ as input: $\si =\fofi(\zi) + \noisei$, where lowercase denotes a realization of the corresponding random variable. The reward is a scalar $x_m \in [0,1]$ while observation $X_i$ is defined over a compact set $\si \in \calX_i \subset \R$, and its parents are defined over $\calZ_i = \prod_{j \in pa_i} \calX_j$ for $i\in [m-1]$.\footnote{Here we consider scalar observations for ease of presentation, but we note that the methodology and analysis can be easily extended to vector observations as in \citet{sussex2022model}}  \looseness-1

\paragraph{Interventions}

\looseness -1 In our setup, an agent and an adversary both perform \emph{interventions} on the SCM~\footnote{Our framework allows for there to be potentially multiple adversaries, but since we consider everything from a single player's perspective, it is sufficient to combine all the other agents into a single adversary.}. 
We consider a soft intervention model \citep{eberhardt2007interventions} where interventions are parameterized by controllable \emph{action variables}. A simple example of a soft intervention is a shift intervention, where actions affect their outputs additively \citep{zhang2021matching}.

First, consider the agent and its action variables $\bm a = {\{ \ai\}_{i=0}^{m}}$. Each action $a_i$ is a real number chosen from some finite set. That is, the space $\calA_i $  of action $a_i$ is   $\calA_i \subset \R_{[0, 1]}$ where $\abs{\calA_i} = K_i$  for some $K_i \in \nN$. Let $\calA$ be the space of all actions $\bm a = {\{ \ai\}_{i=0}^{m}}$. 
We represent the actions as additional nodes in $\G$ (see \cref{fig:overview}): $\ai$ is a parent of only $X_i$, and hence an additional input to $\fofi$. Since $\fofi$ is unknown, the agent does not know apriori the functional effect of $\ai$ on $X_i$. Not intervening on a node $X_i$ can be considered equivalent to selecting $\ai = 0$. For nodes that cannot be intervened on by our agent, we set $K_i = 1$ and do not include the action in diagrams, meaning that without loss of generality we consider the number of action variables to be equal to the number of nodes $m$.
\footnote{There may be constraints on the actions our agent can take. We refer the reader to \citet{sussex2022model} for how our setup can be extended to handle constraints.}

For the adversary we consider the same intervention model but denote their actions by $\a'$ with each $\ai'$ defined over $\calA_i' \subset \R_{[0, 1]}$ where $\abs{\calA_i'} = K_i'$ and $K_i'$ is not necessarily equal to $K_i$. 

According to the causal graph, actions $\a, \a'$ induce a realization of the graph nodes: 
\begin{align}
\label{eq:groud_truth}
& \si = \fofi(\zi, \ai, \ai') + \noisei, \ \ \forall i \in [m].
\end{align}
 
If an index $i$ corresponds to a root node, the parent vector $\zi$ denotes an empty vector, and the output of $\fofi$ only depends on the actions.

\looseness-1

\paragraph{Problem statement}
Over multiple rounds, the agent and adversary intervene simultaneously on the SCM, with known DAG $\calG$ and fixed but unknown functions $\fs = \{\fofi\}_{i=1}^m$ with $\fofi: \calZ_i \times \A_i \times \A_i' \rightarrow \calX_i$. \looseness-1
At round $t$ the agent selects actions $\at = \{\ait\}_{i=0}^m$ and obtains observations $\st = \{\sit\}_{i=0}^m$, where we add an additional subscript to denote the round of interaction. When obtaining observations, the agent also observes what actions the adversary chose $\at' = \{\ait'\}_{i=0}^m$.  We assume the adversary does not have the power to know $\at$ when selecting $\at'$, but only has access to the history of interactions until round $t$. The agent obtains a reward given by \looseness-1
\begin{align}
\label{eq:groud_truth_target}
& y_t = f_m(\bm z_{m, t}, a_{m, t}, a_{m, t}') + \noise_{m, t},
\end{align}
which implicitly depends on the whole action vector $\at$ and adversary actions $\at'$. 

The agent's goal is to select a sequence of actions that maximizes their cumulative expected reward $\sum_{t=1}^T 
r(\at, \at')$ where $r(\at, \at') = \E{y_t\mid \at, \at'}$ and expectations are taken over $\snoise$ unless otherwise stated. The challenge for the agent lies in not knowing a-priori neither the causal model (i.e., the functions $\fs = \{\fofi\}_{i=1}^m$), nor the sequence of adversarial actions $\{\at'\}_{t=1}^{\cdots}$.

\paragraph{Performance metric} 

After $T$ timesteps, we can measure the performance of the agent via the notion of regret:
\begin{align}
    R(T) = \max_{\a \in \A} \sum_{t=1}^T r(\a, \at') - \sum_{t=1}^T r(\at, \at'),
    \label{eq:regret}
\end{align}
\ie, the difference between the best cumulative expected reward obtainable by playing a single fixed action if the adversary's action sequence and $\fs$ were known in hindsight, and the agent's cumulative expected reward. We seek to design algorithms for the agent that are \emph{no-regret}, meaning that $R(T)/T \rightarrow 0$ as $T\rightarrow \infty$, for any sequence $\at'$. We emphasize that while we use the term `adversary', our regret notion encompasses all strategies that the adversary could use to select actions. This might include cooperative agents or mechanism non-stationarities. \looseness -1

 For simplicity, we consider only adversary actions observed after the agent chooses actions. Our methods can be extended to also consider adversary actions observed \emph{before} the agent chooses actions, i.e., a \textit{context}. This results in learning a policy that returns actions depending on the context, rather than just learning a fixed action. This extension is straightforward and we briefly discuss it in~\Cref{app:contextual}. \looseness-1

\textbf{Regularity assumptions} We consider standard smoothness assumptions for the unknown functions $\fofi:\mathcal{S} \rightarrow \X_i$ defined over a compact domain $\mathcal{S}$ \citep{srinivas10}. In particular, for each node $i \in [m]$, we assume that $\fofi(\cdot)$ belongs to a reproducing kernel Hilbert space (RKHS) $\mathcal{H}_{k_i}$, a space of smooth functions defined on $\calS = \calZ_i \times \calA_i \times \calA_i'$.
This means that $\fofil \in \mathcal{H}_{k_i}$ is induced by a kernel function $k_i: \calS \times  \calS \rightarrow \mathbb{R}$. 
We also assume that $k_i(s,s') \leq 1$ for every $s, s' \in \calS$\footnote{This is known as the bounded variance property, and it holds for many common kernels.}. Moreover, the RKHS norm of $\fofi(\cdot)$ is assumed to be bounded $\|\fofi\|_{k_i} \leq \mathcal{B}_i$ for some fixed constant $\mathcal{B}_i>0$.  Finally, to ensure the compactness of the domains $\Z_i$, we assume that the noise $\snoise$ is bounded, i.e., $\noisei \in \left[-1,1\right]^{d}$. \looseness-1

\section{Related Work} 

\paragraph{Causal Bayesian optimization}
Several recent works study how to perform Bayesian optimization on systems with an underlying causal graph. \citet{agliettiCausalBayesianOptimization2020} propose the first CBO setting with hard interventions and an algorithm that uses the do-calculus to generalise from observational to interventional data. \cite{astudilloBayesianOptimizationFunction2021} consider a noiseless setting with soft interventions (known as a function network) where a full system model is learnt, and an expected improvement objective used to select interventions. \citet{sussex2022model} propose \mcbo, an algorithm with theoretical guarantees that can be used with both hard and soft interventions. \mcbo propagates epistemic uncertainty about causal mechanisms through the graph, balancing exploration and exploitation using the optimism principle \citep{srinivas10}. 
All of these methods consider only stationary environments and do not account for possible adversaries. 

\paragraph{Bayesian optimization in non-i.i.d. settings}
Multiple works study how to develop robust strategies against shifts in uncontrollable covariates. They study notions of worst-case adversarial robustness \citep{bogunovic2018}, distributional robustness \citep{kirschner2020,nguyen2020distributionally}, robust mixed strategies \citep{sessa2020mixed} and risk-aversion to uncontrollable environmental parameters ~\citep{makarova2021riskaverse,iwazaki2020meanvariance}.
Nonstationarity is studied in the canonical BO setup in \citet{kirschner2020,nguyen2020distributionally} and in the CBO setup in \citet{Aglietti2021_DynamicCBO}. However, these methods do not accommodate adversaries in the system, e.g., multiple agents that we cannot control. Moreover, the algorithm of \citet{Aglietti2021_DynamicCBO} does not come with a regret guarantee. We accommodate multi-agent settings and provide a regret guarantee. A foundation for our work is the \gpmw algorithm \citep{sessa2019no} which studies learning in unknown multi-agents games and is a special case of our setting. We compare further with \gpmw in \cref{sec:analysis}.  Another special case of \alg with a specific graph is \textsc{StackelUCB} \citep{sessa21online}, designed for playing unknown Stackelberg games with multiple types of opponent 
(see \cref{app:special}).

\section{Method}

In this section, we introduce the methodology for the proposed \alg algorithm. 

\subsection{Calibrated models}
An important component of our approach is the use of calibrated uncertainty models to learn functions $\fs$, as done in~\citet{sussex2022model}. At the end of each round $t$, we use the dataset $\dataset_{t} = \{\ztt, \att, \att', \stt\}$ of past interventions to fit a separate model for every node in the system.
\alg can be applied with any set of models that have calibrated uncertainty. That is, for every node $i$ at time $t$ the model has a mean function $\muit$ and variance function $\sigmait$ (learnt from $\dataset_t$) that accurately capture any epistemic uncertainty in the true model.

\begin{assumption}[\textit{Calibrated model}]
    All statistical models are {\em calibrated} w.r.t. $\fs$, so that $\forall i, t$ there exists a sequence $\beta_t \in \R_{>0}$ such that, with probability at least $(1 - \delta)$,  for all $\zi, \ai, \ai' \in \Zi \times \Ai \times \Ai'$ we have $| \fofi(\zi, \ai, \ai') - \mu_{i, t-1}(\zi, \ai, \ai') | \leq \beta_{t} \sigma_{i, t-1}(\zi, \ai, \ai')$, element-wise.
    \label{as:well_calibrated_model}
\end{assumption}

If the models are calibrated, we can form confidence bounds that contain the true system model with high probability. This is done by combining separate confidence bounds for the mechanism at each node. At time $t$, the known set $\calM_t$ of statistically plausible functions $\tfs = \{\tfi\}_{i=0}^m$ is defined as: 
\begin{equation}
\label{eq:calibrated}
\begin{split}
    \calM_t = \bigg\{& \tfs = \{\tfi\}_{i=0}^m  ,\ \text{s.t.}\ \forall i: \ \tfi \in \mathcal{H}_{k_i}, \ \|\tfi\|_{k_i} \leq \mathcal{B}_i, 
    \text{and} \\
    &  \ \abs{\tfi(\zi, \ai, \ai') - \mu_{i, t-1}(\zi, \ai, \ai')} \leq \beta_{t} \sigma_{i,t-1}(\zi, \ai, \ai'), \ \forall \z_i \in \calZ_i, \ai \in \calA_i, \ai' \in \calA_i' \bigg\}.
\end{split}
\end{equation}
\paragraph{GP models} Gaussian Process (GP) models can be used to model epistemic uncertainty. These are the model class we study in our analysis (\cref{sec:analysis}), where we also give explicit forms for $\beta_t$ that satisfy \cref{as:well_calibrated_model}. 
For all $i \in [m]$, let $\mu_{i,0}$ and $\sigma_{i,0}^2$ denote the prior mean and variance functions for each $\fofi$, respectively. Since $\snoise_i$ is bounded, it is also subgaussian and we denote the variance by $\noisescalei^2$. The corresponding posterior GP mean and variance, denoted by $\mui,t$ and $\sigmait^2$ respectively, are computed based on the previous evaluations $\dataset_{t}$:
\begin{align}
\muit(\citt) &= \mathbf{k}_t(\citt)^\top (\mathbf{K}_t + \noisescalei^2 \mathbf{I}_t)^{-1} \sitt  \label{eq:mean_update}\\
\sigmait^2(\citt) &= k(\citt; \citt) -  \mathbf{k}_t(\citt)^\top (\mathbf{K}_t + \noisescalei^2 \mathbf{I}_t)^{-1} \mathbf{k}_t(\citt) \, , \label{eq:sigma_update}
\end{align}
where $\citt = (\zitt, \aitt, \aitt')$, $\mathbf{k}_t(\citt) = [k(\c_{i, j}, \citt)]_{j=1}^t$, and $\mathbf{K}_t = [k(\c_{i,j}, \c_{i, j'})]_{j,j'}$ is the kernel matrix.

\subsection{The \alg~algorithm}

\looseness-1
\begin{wrapfigure}{t}{.55\linewidth}
    \vspace{-0.4in}
    \begin{minipage}{\linewidth}
    \begin{algorithm}[H]
        \caption{Causal Bayesian Optimization Multiplicative Weights (\alg)}
        \label{alg:model-based-cbo}
        \begin{algorithmic}[1]  
            \Require parameters $\tau, \{\beta_{t}\}_{t\geq1}, \G, \snoiserv$, kernel functions $k_{i}$ and prior means $\mu_{i,0} = 0$ $\forall i \in [m]$ 
            \State Initialize $\bw^1 = \frac{1}{\abs{\calA}}(1, \ldots, 1) \in \mathbb{R}^{\abs{\calA}}$
            \For{$t = 1, 2, \hdots$}
                    \State Sample $\a_t \sim \bw^t$
                    \State Observe samples $\{\zit, \sit, \ait'\}_{i=0}^m$
               \State Update posteriors $\{\muit(\cdot), \sigmait^2(\cdot) \}_{i=0}^m$ 
                \For{$\a \in \calA$}
                    \State Compute $\ucb_t(\a, \a_t')$ using \cref{alg:oracle}
                    \State $\hat{y}_{\a}^t = \min(1, \ucb_t(\a, \a_t'))$
                    \State $w_{\a}^{t+1} \propto w_{\a}^t \exp(\tau \cdot \hat{y}_{\a}^t)$
                \EndFor
            \EndFor 
        \end{algorithmic}
    \end{algorithm}
\end{minipage}
\par
\vspace{-0.15in}
\end{wrapfigure}

We can now present the proposed 
\looseness-1 \alg algorithm. Our approach is based upon the classic multiplicative weights method~\citep{littlestone1994weighted,freund1997decision}, widely used in adversarial online learning. Indeed, ACBO can be seen as a specific structured online learning problem. At time $t$, \alg maintains a weight $w_{\a}^{t}$ for every possible intervention $\a \in \calA$ such that $\sum_{\a} w_{\a}^{t} = 1$ and uses these to sample the chosen intervention, i.e., $\a_t \sim w_{\a}^{t}$. Contrary to standard CBO (where algorithms can choose actions deterministically), in adversarial environments such as ACBO randomization is necessary to achieve no-regret, see, e.g.,~\cite{cesa2006prediction}.

Then, \alg updates the weights at the end of each round based upon what action the adversary took $\a'_t$ and the observations $\s_t$. If the mechanism between actions, adversary actions, and rewards were to be completely known (i.e., the function $r(\cdot)$ in our setup), a standard MW strategy suggests updating the weight for every $\a$ according to
\begin{equation}
\label{eq:mw_update}
w_{\a}^{t+1} \propto w_{\a}^t \exp{} \left(\tau \cdot r(\a, \a_t') \right) \; ,  \nonumber
\end{equation}
where $\tau$ is a tunable learning rate. In particular, $r(\a, \a_t')$ is the counterfactual of what would have happened, in expectation over noise, had $\a_t'$ remained fixed but the algorithm selected $\a$ instead of $\a_t$. \looseness-1

However, in ACBO the system is unknown and thus such counterfactual information is not readily available. On the other hand, as outlined in the previous section, we can build and update calibrated models around the unknown mechanisms and then estimate counterfactual quantities from them.
Specifically, \alg utilizes the confidence set $\calM_t$ to compute an \emph{optimistic} estimate of $r(\a, \a_t')$: 
\begin{align} \label{eq:cucb}
    \ucb_t(\a, \a') &= \max_{\tfs \in \calM_t} \E[\snoise]{y \mid \tfs, \bm a, \a_t'}.
\end{align}
Given action $\a$, opponent action $\a_t'$ and confidence set $\calM_t$, $\ucb_t(\a, \a')$ represents the highest expected return among all system models in this confidence set. \alg uses such estimates to update the weights in place of the true but unknown counterfactuals $r(\a, \a_t')$.
Computing $\ucb_t(\a, \a')$ is challenging since our confidence set $\calM_t$ consists of a set of $m$ different models and one must propagate epistemic uncertainty through all models in the system, from actions to rewards.  
Because mechanisms can be non-monotonic and nonlinear, one cannot simply independently maximize the output of every mechanism. We thus defer this task to an algorithmic subroutine (denoted causal UCB oracle) which we describe in the next subsection.
\alg is summarized in \cref{alg:model-based-cbo}.

We note that \alg strictly generalizes the \textsc{GP-MW} algorithm of~\citet{sessa2019no}, which was first to propose combining MW with optimistic counterfactual reward estimates. However, they consider a trivial causal graph with only a target node, thus a \emph{single} GP model. For this simpler model one can compute $\ucb_t$ in closed-form but must ignore any causal structure in the reward model.  In \cref{sec:analysis} and in our experiments we show \alg can significantly outperform \gpmw both theoretically and experimentally.\looseness=-1 

\subsection{Causal UCB oracle}

\looseness-1 The problem in \cref{eq:cucb} is not amenable to commonly used optimization techniques, due to the maximization over a set of functions with bounded RKHS norm. 
Therefore, similar to \citet{sussex2022model} we make use of the reparameterization trick to write any $\tfi \in \tfs \in \calM_t$ using a function $\etai :\calZ_i \times \calA_i \times \calA_i' \rightarrow [-1, 1]$ as \looseness-1
\begin{equation}
\tfit(\zoi, \aoi, \aoi') = \mu_{i, t-1}(\zoi, \aoi, \aoi') + \beta_{t} \sigma_{i, t-1}(\zoi, \aoi, \aoi') \etai (\zoi, \aoi, \aoi'),
\label{eq:reparametrization}
\end{equation}
where $\soi = \tfi(\zoi, \aoi, \aoi') + \tilde{\noise}_i$ denotes observations from simulating actions in one of the plausible models, and not necessarily the true model. The validity of this reparameterization comes directly from the definition of $\calM_t$ in \cref{eq:calibrated} and the range of $\etai$. 

The reparametrization allows for rewriting $\ucb_t$ in terms of $\etas: \Z \times \A \times \A' \to [-1,1]^{|\X|}$: \looseness-1
\begin{equation}
\ucb_t(\a, \a') = \max_{\etas(\cdot)} \E[\snoise]{y \mid \tfs, \bm a, \a'_t}, \ \ \text{s.t.  } \tfs = \{\tfit\} \text{\ \  in \cref{eq:reparametrization}} . 
\label{eq:reparam_ucb}
\end{equation}

\begin{wrapfigure}{t}{.55\linewidth}
\vspace{-0.3in}
    \begin{minipage}{\linewidth}
\begin{algorithm}[H]
    \caption{Causal UCB Oracle}
    \label{alg:oracle}
    \begin{algorithmic}[1]  
        \Require neural networks $\etas$, actions $\a, \a'$, model posteriors $\bmu, \bsigma$, parameter $\beta_t$, repeats $N_\text{rep}$.\looseness=-1
    \State Initialize $\textsc{Solutions} = \emptyset$
    \For{$j = 1, \hdots, N_\text{rep}$ }
        \State Randomly initialize weights of each $\eta_i \in \etas$
        \State $\ucb_{t,j} =  \max_{\etas (\cdot)} \mathbb E [y \mid \tfs, \bm a, \a']$ computed via stochastic gradient ascent on $\etas$
        \State $\textsc{Solutions} =\textsc{Solutions} \cup \{\ucb_{t,j}\}$.
    \EndFor \\
    \Return $\max(\textsc{Solutions})$
    \end{algorithmic}
\end{algorithm}
\end{minipage}
\par
\vspace{-0.2in}
\end{wrapfigure}

\looseness-1 In practice, we can parameterize $\etas$, for example with neural networks, and maximize this objective using stochastic gradient ascent, as described in \cref{alg:oracle}. The use of the reparameterization trick simplifies the optimization problem because we go from optimizing over functions with a tricky constraint ($\tfs \in \calM_t$) to a much simpler constraint ($\etas$ just needing output in $[0, 1]$). Since the optimization problem is still non-convex, we deploy multiple random re-initializations of the $\etas$ parameters.

\section{Analysis}
\label{sec:analysis}
Here we analyse the theoretical performance of \alg and provide a bound on its regret as a function of the underlying causal graph. For our analysis, we make some additional technical assumptions. First,  we assume all $\fofi \in \fs$ are $L_f$-Lipschitz continuous. This follows directly from the regularity assumptions of \cref{sec:setup}. Second, we assume that $\forall i, t$, the functions $\mui$, $\sigmait$ are $L_\mu$,  $L_\sigma$ Lipschitz continuous. This holds if the RKHS of each $\fofi$ has a Lipschitz continuous kernel (see \citet{curiEfficientModelBasedReinforcement2020}, Appendix G). 
 
In addition, to show how the regret guarantee depends on the specific GP hyperparameters used, we use a notion of model complexity for each node $i$:
\begin{equation}
\label{eq:mutual_info_node}
\gamma_{i,T} := 
    \underset{\scriptstyle A_i\subset\ \{\Zi \times \Ai \times \Ai'\}^T \atop\scriptstyle}{\max} \; I(\bm x_{i, 1:T}, \fofi) 
\end{equation}
where $I$ is the mutual information and the observations $\s_{i, 1:T}$ implicitly depend on the GP inputs $A_i$. This is analogous to the maximum information gain used in the analysis of standard BO algorithms \cite{srinivas10}. We also define 
\begin{equation}
\label{mutual_info}
\gamma_T = \max_i \gamma_{i, T}
\end{equation}
as the worst-case maximum information gain across all nodes in the system. 

Finally, we define two properties of the causal graph structure that the regret guarantee will depend on. In the DAG $\G$ over nodes $\{X_i\}_{i=0}^m$, let $\degree$ denote the maximum number of parents of any variable in $\calG$: $\degree = \max_{i} \abs{\pa(i)}$. Then let $N$ denote the maximum distance from a root node to $X_m$: $N = \max_i \mathrm{dist}(X_i, X_m)$ where $\mathrm{dist}(\cdot, \cdot)$ is the number of edges in the longest path from a node $X_i$ to $X_m$. 
We can then prove the following theorem on the regret of \alg.

\begin{theorem}\label{thm:1}
Fix $\delta \in (0,1)$, if actions are played according to \alg with $\beta_t = \mathcal{O} \left(\calB + \sqrt{\gamma_{t-1} + \log(m/\delta)} \right)$ and $\tau = \sqrt{(8\log \abs{\calA})/T}$, then with probability at least $1- \delta$, 
\begin{equation*}
R(T) = \mathcal{O}\left(\sqrt{T \log \abs{\calA}} +  \sqrt{ T \log(2/\delta)} +  \left(\calB+  \sqrt{\gamma_{T} + \log(m/\delta)} \right)^{N+1} \degree^N L_{\sigma}^N L_f^N m \sqrt{T \gamma_T} \right) \,,
\end{equation*}
where $\mathcal{B} = \max_i \mathcal{B}_i$, $\gamma_T = \max \gamma_{i, t}$. That is, $\gamma_T$ is the worst-case maximium information gain of any of the GP models.
\end{theorem}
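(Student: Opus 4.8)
The plan is to combine the textbook Hedge (multiplicative-weights) analysis with a causal propagation-of-uncertainty argument that controls the optimism of $\ucb_t$; the adversary's actions $\a_t'$ enter only as extra fixed arguments of the mechanisms $\fofi$ and play no special role. \emph{Step 1 (conditioning and Hedge).} Throughout I would condition on the calibration event of \cref{as:well_calibrated_model}, which for the stated $\beta_t$ holds with probability at least $1-\delta/2$ and gives $\fs\in\calM_t$ for all $t$; hence $\ucb_t(\a,\a_t')\ge r(\a,\a_t')$, and since $r\le1$ the clipped surrogate satisfies $\hat y^t_\a=\min(1,\ucb_t(\a,\a_t'))\ge r(\a,\a_t')$ for every $\a$. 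The iterates $\bw^t$ in \cref{alg:model-based-cbo} are exactly the Hedge weights driven by the gain vectors $\hat y^t\in[0,1]^{\abs{\calA}}$, so the classical guarantee gives, for any fixed $\a^\star$, $\sum_t \hat y^t_{\a^\star} - \sum_t\sum_\a w^t_\a\hat y^t_\a \le \frac{\log\abs{\calA}}{\tau}+\frac{\tau T}{8} = \sqrt{T\log\abs{\calA}/2}$ with $\tau=\sqrt{8\log\abs{\calA}/T}$. Taking $\a^\star$ to attain $\max_\a\sum_t r(\a,\a_t')$ and using optimism ($\sum_t r(\a^\star,\a_t')\le\sum_t\hat y^t_{\a^\star}$), it remains to compare $\sum_t\sum_\a w^t_\a\hat y^t_\a$ with $\sum_t r(\a_t,\a_t')$.

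\emph{Step 2 (surrogate to played reward).} I would write $\sum_t\sum_\a w^t_\a \hat y^t_\a - \sum_t r(\a_t,\a_t') = \sum_t\sum_\a w^t_\a\bigl(\hat y^t_\a - r(\a,\a_t')\bigr) + \sum_t\bigl(\sum_\a w^t_\a r(\a,\a_t') - r(\a_t,\a_t')\bigr)$. Conditioned on the history up to round $t$ (which fixes $\bw^t$, $\a_t'$ and $\hat y^t$), the last sum is a martingale-difference sequence bounded in $[-1,1]$, so Azuma--Hoeffding bounds it by $\mathcal{O}(\sqrt{T\log(1/\delta)})$ with high probability; a second Azuma step (using $\a_t\sim\bw^t$ and that $\hat y^t$ is independent of $\a_t$) replaces the $\bw^t$-average in the first sum by the realized draw $\a_t$, at the cost of another $\mathcal{O}(\sqrt{T\log(1/\delta)})$ term. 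These two combine into the $\mathcal{O}(\sqrt{T\log(2/\delta)})$ term, and since $\hat y^t_{\a_t}\le\ucb_t(\a_t,\a_t')$ it now suffices to bound $\sum_t\bigl(\ucb_t(\a_t,\a_t') - r(\a_t,\a_t')\bigr)$.

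\emph{Step 3 (propagating the optimism gap — the hard part).} This is where the graph quantities enter, following the MCBO template of \citet{sussex2022model} with the adversary inputs carried along. Fix $t$ and let $\tfs\in\calM_t$ attain $\ucb_t(\a_t,\a_t')$; since $\tfs,\fs\in\calM_t$ they differ at node $i$ by at most $2\beta_t\sigma_{i,t-1}(\cdot)$ pointwise. Let $\Delta_{i,t}$ bound $\bigl|\mathbb{E}_{\snoise}[\widetilde{x}_{i,t}] - \mathbb{E}_{\snoise}[x_{i,t}]\bigr|$ when both SCMs are run with $(\a_t,\a_t')$. Peeling one layer — splitting $\tfi(\widetilde{\z}_{i,t},\cdot)-\fofi(\z_{i,t},\cdot)$ into a term Lipschitz in the parents and a confidence-width term, invoking $L_f$-Lipschitzness of $\fofi$ and $L_\sigma$-Lipschitzness of $\sigma_{i,t-1}$, plus the bounded-noise/compactness bookkeeping of \citet{sussex2022model} — yields a recursion of the shape
\[
\Delta_{i,t}\;\le\; 2\beta_t\,\bar\sigma_{i,t-1} \;+\; (L_f+\beta_t L_\sigma)\sum_{j\in\pa_i}\Delta_{j,t},
\]
where $\bar\sigma_{i,t-1}$ is $\sigma_{i,t-1}$ evaluated along the realized round-$t$ trajectory $\c_{i,t}$. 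Unrolling over the $\le N$ edges of any root-to-$X_m$ path, with in-degree $\le\degree$ and $\beta_t\le\beta_T$ (and all constants WLOG $\ge1$), gives $\ucb_t(\a_t,\a_t') - r(\a_t,\a_t') = \Delta_{m,t} = \mathcal{O}\bigl(\beta_T^{N+1}\degree^{N}L_\sigma^{N}L_f^{N}\textstyle\sum_{i=0}^m \bar\sigma_{i,t-1}\bigr)$.

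\emph{Step 4 (summation and union bound).} Summing over $t$ and applying the standard maximum-information-gain bound $\sum_{t=1}^T\sigma_{i,t-1}(\c_{i,t}) = \mathcal{O}(\sqrt{T\gamma_{i,T}}) = \mathcal{O}(\sqrt{T\gamma_T})$ (valid since $k_i\le1$, as in \citet{srinivas10}) over the $m+1$ nodes gives $\sum_t(\ucb_t(\a_t,\a_t') - r(\a_t,\a_t')) = \mathcal{O}(\beta_T^{N+1}\degree^{N}L_\sigma^{N}L_f^{N}\,m\sqrt{T\gamma_T})$. Plugging in $\beta_T = \mathcal{O}(\calB+\sqrt{\gamma_T+\log(m/\delta)})$, adding the $\mathcal{O}(\sqrt{T\log\abs{\calA}})$ term from Step 1 and the $\mathcal{O}(\sqrt{T\log(2/\delta)})$ term from Step 2, and union-bounding over the calibration event and the two Azuma events (total failure probability $\le\delta$) yields the claimed bound. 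I expect Step 3 to be the genuine obstacle — pinning down the recursion and its constants, and cleanly handling expectations over the noise and the mismatch between evaluating $\sigma_{i,t-1}$ at the true versus simulated parent values — while Steps 1, 2 and 4 are routine given the Hedge, Azuma--Hoeffding, and information-gain lemmas.
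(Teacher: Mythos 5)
Your proposal is correct and follows essentially the same route as the paper's proof: optimism from the calibration event, a multiplicative-weights regret bound on the clipped surrogates $\hat y^t_{\a}$ (the paper packages your Hedge-plus-Azuma Steps 1--2 into \citet[Corollary~4.2]{cesa2006prediction}), a per-round bound on $\ucb_t(\a_t,\a_t')-r(\a_t,\a_t')$ by propagating confidence widths through the graph (your Step 3 recursion is exactly the content of the paper's Reward Confidence Lemma, which it obtains by citing \citet[Lemma~4]{sussex2022model} and noting that $\a_t'$ is just a fixed extra input), and a final summation via the maximum-information-gain bound with a union bound over the events. The only difference is that you sketch inline the two ingredients the paper invokes as black boxes.
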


\cref{thm:1}, whose proof is deferred to the appendix, shows that \alg is no-regret for a variety of common kernel functions, for example linear and squared exponential kernels. This is because even when accounting for the dependence of $\gamma_T$ in $T$, the bound is still sublinear in $T$. We discuss the dependence of $\gamma_T$ on $T$ for specific kernels in \cref{app:particular_kernels}. 

\paragraph{Comparison with~\gpmw} We can use \cref{thm:1} to demonstrate that the use of graph structure in \alg results in a potentially exponential improvement in the rate of regret compared to \gpmw~\citep{sessa2019no} with respect to the number of action variables $m$. Consider the graph in \cref{fig:analysis}b (see Appendix) for illustration.
When all $X_i$ in \alg are modeled with squared exponential kernels, we have $\gamma_T = \mathcal O((\degree +2)(\log T)^{\degree + 3})$. This results in a cumulative regret that is exponential in $\degree$ and $N$. Instead, \gpmw uses a single high-dimensional GP (\cref{fig:analysis}a), implying $\gamma_T = \mathcal O((\log T)^m)$ for a squared exponential kernel.  
Note that $m \geq \degree + N$ and thus, for several common graphs, the exponential scaling in $N$ and $\degree$ could be significantly more favorable than the exponential scaling in $m$. Specifically for the binary tree of \cref{fig:analysis}b, where $N=\log(m)$ and $\degree=2$, the cumulative regret of \alg will have only \emph{polynomial} dependence on $m$. 

Furthermore, in addition to favorable scaling of the regret in $m$, the model class considered by \alg is considerably larger than that of \gpmw, because \alg can model systems where reward depends on actions according to a composition of GPs based on $\G$, rather than a single GP.\looseness=-1 

\section{Computational Considerations in Larger Action Spaces} \label{sec:distributed}

The computational complexity of \alg is dominated by the computation of the counterfactual expected reward estimate for each possible intervention $\a \in \A$ (Line 7 in~\cref{alg:model-based-cbo}). In many situations, even with a large number of action variables $m$, $\abs{\calA}$ may still be small and thus \alg 
 feasible to implement. 
A notable example is when there exist constraints on the possible interventions, such as only being able to intervene on at most a few nodes simultaneously. In the worst case, though, $\abs{\cal{A}}$ grows exponentially with $m$ and thus \alg may not be computationally feasible.
In this section we will show how prior knowledge of the problem structure can be used to modify \alg to be computationally efficient even in settings with huge action spaces. We note that the computational efficiency of \alg is not affected by the number of possible adversary interventions $\abs{\calA'}$ because these are only observed a-posteriori (in fact, $\calA'$ need not be finite).  

The general idea consists of decomposing \alg into a \emph{decentralized} algorithm, which we call \subalg, that orchestrates multiple sub-agents. First recall that $\calA$ can be decomposed into $m$ smaller action spaces so that $\calA = \calA_1 \times ... \times \calA_m$. We then consider $m$ independent agents where each agent $i$ performs \alg but only over the action space $\calA_i$. Importantly, the observations of the actions of the other agents are considered part of $\calA'$ for that agent. Moreover, all agents utilize the same calibrated model $\calM_t$ at each round. We provide full pseudo-code and theory in the appendix. Our approach is inspired by \citet{sessa21online} who propose a distributed analog of the \gpmw algorithm.\looseness=-1 

In~\cref{sec:appendix_distributed} we show that under specific assumptions on the reward function, \subalg provably enjoys an approximate version of the guarantees in~\cref{thm:1}.
Namely, we study a setting where $r$ is a \emph{submodular} and monotone increasing function of $a$ for any given set of adversary actions. Submodularity is a diminishing returns property (see formal definition in the appendix) widely exploited in a variety of domains to derive efficient methods with approximation guarantees, see e.g.~\cite{marden2013distributed, sessa21online, Paccagnan2022}. Similarly, we exploit submodularity in the overall reward's structure to parallelize the computation over the possible interventions in our causal graph. In our experiments, we study rebalancing an SMS where \alg is not applicable due to a combinatorial action space but \subalg is efficient and achieves good performance. \looseness-1

\section{Experiments}
\label{sec:experiments}
\looseness-1 We evaluate \alg and \subalg on various synthetic problems and a simulator of rebalancing an SMS based on real data. The goal of the experiments is to understand how the use of causal modelling and adaptability to external factors in \alg affects performance compared to other BO methods that are missing one or both of these components.
All methods are evaluated using 10 repeats, where we report mean and standard error for different time horizons.

\subsection{Function networks}

\begin{figure}[t]
    \centering
    \begin{subfigure}[b]{0.3\columnwidth}
        \centering\includegraphics[width=\columnwidth]{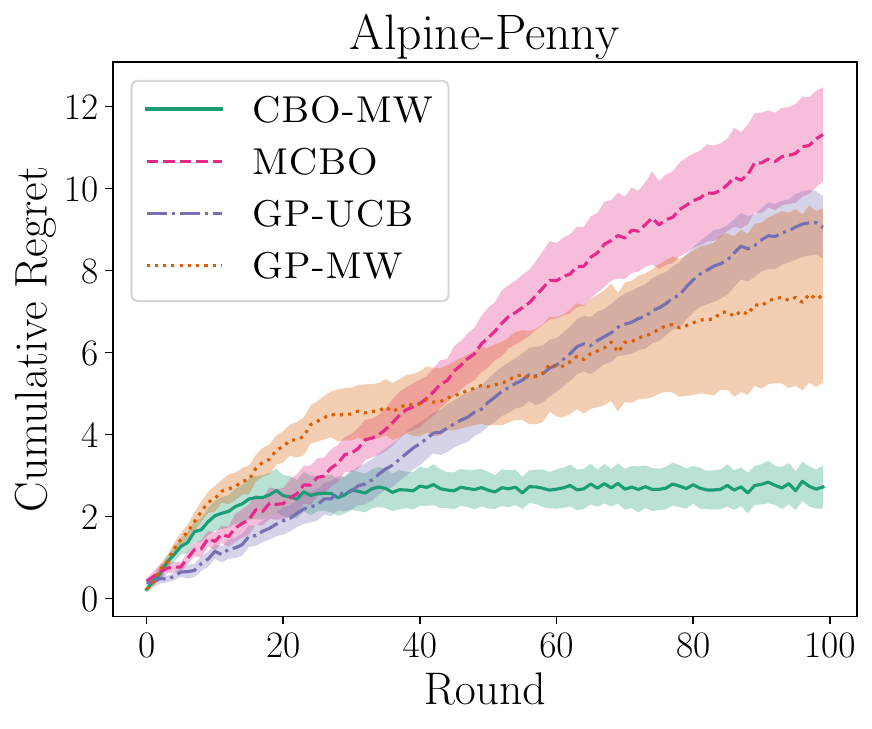}
     \end{subfigure}
       \begin{subfigure}[b]{0.3\columnwidth}
        \centering\includegraphics[width=\columnwidth]{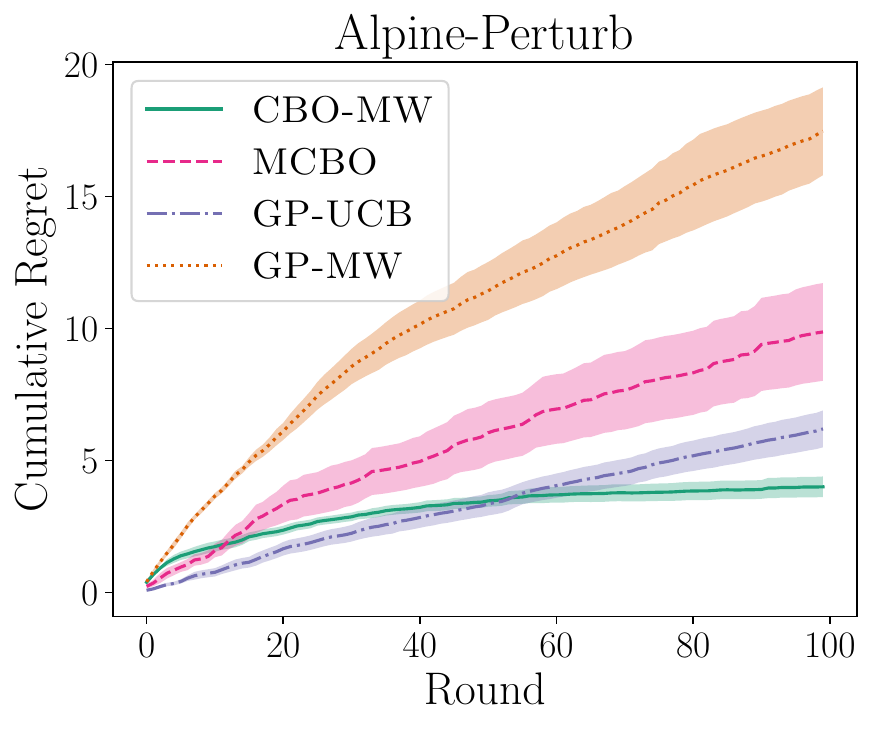}
     \end{subfigure}
        \begin{subfigure}[b]{0.3\columnwidth}
        \centering\includegraphics[width=\columnwidth]{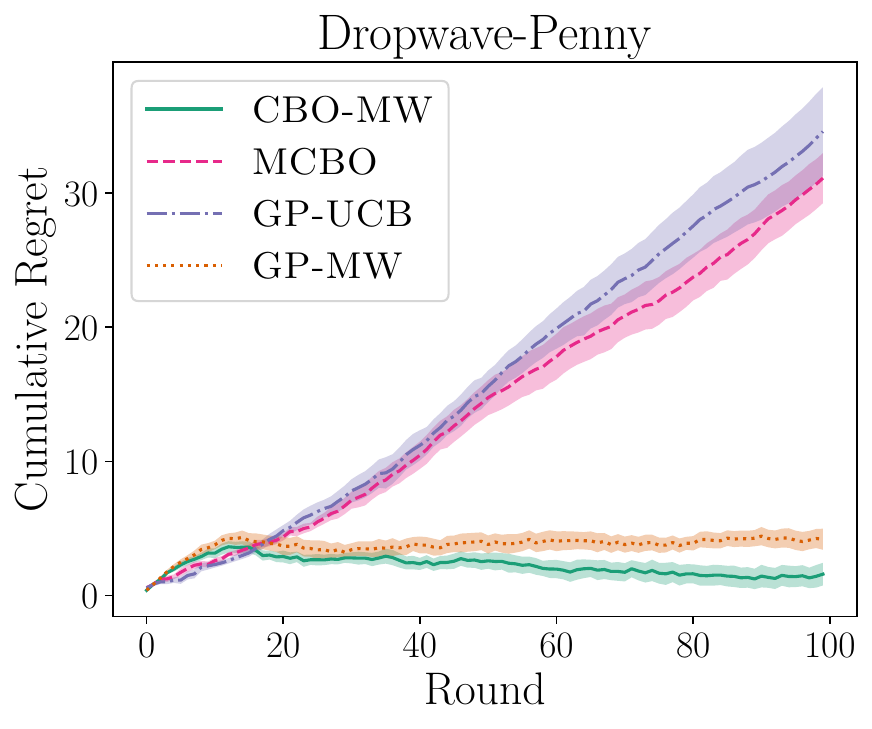}
     \end{subfigure}
     \caption{\alg achieves sublinear regret and high sample efficiency on 3 function networks in the presence of adversaries. Non-adversarial methods (\ucbalg and \mcbo) achieve linear regret and other non-causal methods (\gpmw) are less sample efficient.} 
    \label{fig:fn}
    \vspace{-0.15in}
\end{figure}

\paragraph{Networks}

We evaluate \alg on 8 diverse environments. As a base, we take 4 examples of function networks from \citet{astudilloBayesianOptimizationFunction2021}. Function networks is a noiseless CBO setup with no adversaries. To study an adversarial setup, we modify each environment by adding adversaries' inputs in 2 ways: Penny and Perturb. In Penny, an adversary can affect a key node with a dynamic similar to the classic matching pennies game. In Perturb, the adversary can perturb some of the agent's interventions. The exact way in which the adversary's actions affect the environment is unknown and the actions themselves can only be observed a-posteriori. In each round, the adversary has a 20\% chance to play randomly, and an 80\% chance to try and minimize the agent's reward, using full knowledge of the agent's strategy and the environment. The causal graph and structural equation model for each environment in given in Appendix~\ref{app:experiments}. 

\paragraph{Baselines}
We compare the performance of  \alg (\cref{alg:model-based-cbo}) with \gpmw \cite{sessa2019no} which does not exploit the causal structure. Moreover, 
to study the importance of adversarial methods, we additionally compare against non-adversarial baselines
\ucbalg \citep{srinivas10}, and \mcbo \citep{sussex2022model} which uses a similar causal modelling and optimism approach to \alg but cannot account for adversaries.

\paragraph{Results}
We give results from 3 of the 8 environments in \cref{fig:fn}. The rest can be found in Appendix~\ref{app:experiments}. 
In general across the 8 environments, \mcbo and \ucbalg obtain linear regret, while the regret of \gpmw and \alg grows sublinearly, consistent with \cref{thm:1}.
We observe that \alg has the strongest or joint-strongest performance on 7 out of the 8 environments. The setting where \alg was not strongest involves a dense graph where worst-case GP sparsity is the same as \gpmw, which is consistent with our theory. In settings such as Alpine where the graph is highly sparse, we observe the greatest improvements from using \alg. 

\subsection{Learning to rebalance Shared Mobility Systems (SMSs)}

We evaluate \subalg on the task of rebalancing an SMS, a setting introduced in \citet{sessa21online}. The goal is to allocate bikes to city locations (using relocation trucks, overnight) to maximize the number of bike trips in the subsequent day. The action space is combinatorial because each of the $5$ trucks can independently reallocate units to a new depot. This makes the approaches studied above computationally infeasible, so we deploy \subalg. We expect the reward $Y_t$ (total trips given unit allocation) to be monotone submodular because adding an extra unit to a depot should increase total trips but with decreasing marginal benefit, as discussed in \citet{sessa21online}. The goal is to compare \subalg to a distributed version of \gpmw and understand whether the use of a causal graph can improve sample efficiency.  

\begin{figure}[t]
    \centering
       \begin{subfigure}[b]{0.49\textwidth}
        \includegraphics[width=\textwidth]{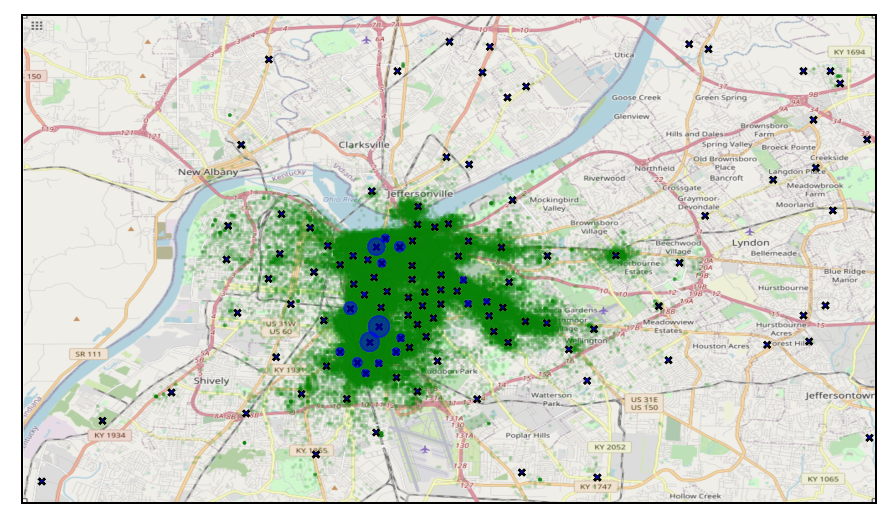}
        \caption{}
     \end{subfigure}
        \begin{subfigure}[b]{0.49\textwidth}
        \includegraphics[width=\textwidth]{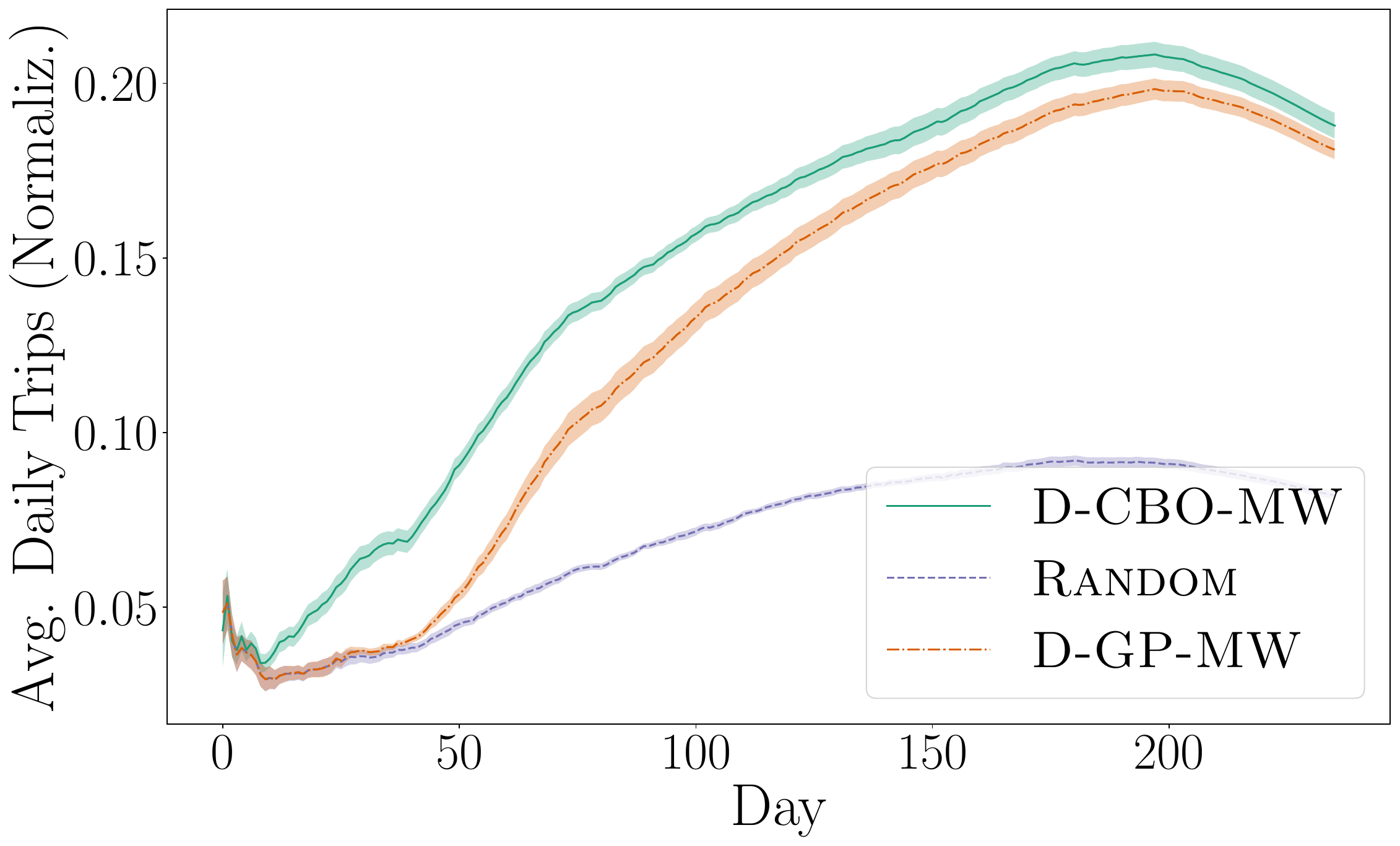}
        \caption{}
     \end{subfigure}
     \caption{Learning to rebalance an SMS. (\textbf{a}) Green dots represent demand for trips, black crosses are depot locations, and blue circles are scaled in size to the frequency that \subalg assigns bikes to that depot (averaged across all repeats). \subalg assigns bikes in strategic locations near high demand. (\textbf{b}) \subalg fulfills more total trips across over 200 days compared to \textsc{D-GP-MW} which does not use a causal reward model.}
    \label{fig:sms}
    \vspace{-0.2in}
\end{figure}

\paragraph{Setup and trips simulator}
A simulator is constructed using historical data from an SMS in Louisville, KY \citep{LouisvilleKentuckyOpen}. Before each day $t$, all 40 bikes in the system are redistributed across 116 depots. This is done by $5$ trucks, each of which can redistribute 8 bikes at one depot, meaning a truck $i$'s action is $a_i \in [116]$.  
The demand for trips corresponds to real trip data from \citet{LouisvilleKentuckyOpen}. After each day, we observe weather and demand data from the previous day which are highly non-stationary and thus correspond to adversarial interventions $\a'$ according to our model. Our simulator is identical to the one of \citet{sessa2019no}, except we exclude weekends for simplicity. We give more details in Appendix~\ref{app:experiments}.\looseness=-1 

We compare three methods on the SMS simulator. First, in \rand each truck places its bikes at a depot chosen uniformly at random. Second, \dgpmw modifies \gpmw using the same ideas as those presented in \cref{sec:distributed}. It is a special case of \subalg but using the graph in \cref{fig:overview}(a). That is, a single GP is used to predict $Y_t$ given $\a_t, \a_t'$. Finally, we evaluate \subalg which utilizes a more structured causal graph exploiting the reward structure. Based on historical data, we cluster the depots into regions such that trips don't frequently occur across two different regions (e.g., when such regions are too far away). Then, our graph models the trips starting in each region only using bike allocations to depots in that region. $Y_t$ is computed by summing the trips across all regions (see \cref{fig:subalgo} (b) in the appendix for an illustration). This system model uses many low-dimensional GPs instead of a single high-dimensional GP as used in $\dgpmw$.

\paragraph{Results}
\cref{fig:sms} (a) displays the allocation strategy of \subalg (blue dots are proportional to the number of bikes allocated to each depot, averaged across the 200 days). We observe that \subalg learns the underlying demand patterns and allocates bikes in strategic areas where the demand (green dots) is higher. 
Moreover, in \cref{fig:sms} (b) we see that \subalg is significantly more sample efficient than \dgpmw. This improvement is largely attributed to early rounds, where \subalg learns the demand patterns much faster than \dgpmw due to learning smaller-dimensional GPs.\looseness=-1

\section{Conclusion}
We introduce \alg, the first principled approach to causal Bayesian optimization in non-stationary and potentially multi-agent environments. We prove a sublinear regret guarantee for \alg and demonstrate a potentially exponential improvement in regret, in terms of the number of possible intervention targets, compared to state-of-the-art methods. 
We further propose a distributed version of our approach, \subalg, that can scale to large action spaces and achieves approximate regret guarantees when rewards are monotone submodular. Empirically, our algorithms outperform existing non-causal and non-adversarial methods on synthetic function network tasks and on an SMS rebalancing simulator based on real data.

\section*{Acknowledgements}

We thank Lars Lorch for his feedback on the draft of this paper.

This research was supported by the Swiss National Science Foundation under NCCR Automation, grant agreement 51NF40 180545, by the European Research Council (ERC) under the European Union’s Horizon grant 815943, and by
ELSA (European Lighthouse on Secure and Safe AI) funded by the European Union under grant agreement No. 101070617. 

\bibliography{biblio.bib}

\newpage
\appendix
\section*{{\large{Adversarial Causal Bayesian Optimization}}}
\section{\Large{Appendix}}

\label{sec:appendix}

\subsection{Proof of \cref{thm:1}}
\label{app:thm-1-proof}

We start by proving that there exists a sequence of $\beta_t$ that allow \cref{as:well_calibrated_model} to hold. 
\begin{lemma}[Node Confidence Lemma ]\label{lem:node_confidence}

Let $\mathcal{H}_{k_i}$ be a RKHS with underlying kernel function $k_i$. Consider an unknown function $f_i:\calZ_i \times \calA_i' \times \A_i' \rightarrow \calX_i$ in $\mathcal{H}_{k_i}$ such that $\| f\|_{k_i} \leq \calB_i$, and the sampling model $\sit = f(\zit, \ait, \ait') + \noise_t$ where $\noise_t$ is $\noisescale$-sub-Gaussian (with independence between times). By setting 
$$ \beta_t = \calB_i + \noisescale\sqrt{2 \left(\gamma_{t-1} + \log(m/\delta) \right)} \, ,$$
the following holds with probability at least $1-\delta$: 
$$  | \mu_{t-1}(\zi, \ai, ai') - f_i(\zi, \ai, \ai') | \leq \beta_t \sigma_{i, t-1}(\zi, \ai, \ai') \: ,$$

$\forall \zi, \ai, \ai' \in \calZ_i' \times \calA_i \times  \calA_i', \quad \forall t \geq 1, \quad \forall i \in [m], \quad$
where $\mu_{t-1}(\cdot)$ and $\sigma_{t-1}(\cdot)$ are given in \cref{eq:mean_update}, \cref{eq:sigma_update} and $\gamma_{t-1}$ is the maximum information gain defined in~\eqref{mutual_info}.
\end{lemma}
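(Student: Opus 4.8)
The plan is to reduce this per-node statement to a standard single-function GP confidence bound, since the lemma is essentially a restatement of the classical RKHS confidence result of \citet{srinivas10} (and its refinement by \citet{chowdhury2017kernelized} / \citet{abbasi2011improved}), applied independently to each of the $m$ mechanisms and then combined by a union bound. First I would fix a single node $i \in [m]$ and observe that, from the node's point of view, the inputs $\cit = (\zit, \ait, \ait')$ form an arbitrary (adapted) sequence in the compact domain $\calS_i = \calZ_i \times \calA_i \times \calA_i'$, and the observations $\sit = f_i(\cit) + \noise_t$ follow exactly the model assumed by the standard kernelized bandit analysis: $f_i$ lives in $\mathcal{H}_{k_i}$ with $\|f_i\|_{k_i} \le \calB_i$, the kernel satisfies $k_i(s,s')\le 1$, and the noise $\noise_t$ is $\noisescale$-sub-Gaussian and independent across rounds. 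The posterior mean $\mu_{i,t-1}$ and variance $\sigma^2_{i,t-1}$ in \cref{eq:mean_update}–\cref{eq:sigma_update} are precisely the GP posterior quantities used there.

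Next I would invoke the standard self-normalized concentration bound: with probability at least $1 - \delta'$, simultaneously for all $t \ge 1$ and all $s \in \calS_i$,
\[
|\mu_{i,t-1}(s) - f_i(s)| \le \left( \calB_i + \noisescale\sqrt{2\big(\gamma_{i,t-1} + \log(1/\delta')\big)} \right) \sigma_{i,t-1}(s),
\]
where $\gamma_{i,t-1}$ is the maximum information gain for kernel $k_i$ after $t-1$ rounds. This is where the main work sits, but it is not new work: it follows from bounding the key martingale term via $\log\det$ of the kernel matrix, which is in turn controlled by the information gain $\gamma_{i,t-1}$; I would simply cite the relevant theorem rather than reprove it. Using $\gamma_{i,t-1} \le \gamma_{t-1}$ from the definition in \eqref{mutual_info} then lets me replace $\gamma_{i,t-1}$ by the uniform quantity $\gamma_{t-1}$ inside the bracket, only enlarging $\beta_t$.

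Finally I would take a union bound over the $m$ nodes: setting $\delta' = \delta/m$ and intersecting the $m$ events, each of probability at least $1 - \delta/m$, yields that with probability at least $1-\delta$ the bound holds simultaneously for all $i \in [m]$, all $t \ge 1$, and all $(\zi,\ai,\ai')$, with
\[
\beta_t = \calB_i + \noisescale\sqrt{2\big(\gamma_{t-1} + \log(m/\delta)\big)},
\]
which is exactly the claimed choice (one may use the common upper bound $\beta_t \le \calB + \noisescale\sqrt{2(\gamma_{t-1}+\log(m/\delta))}$ with $\calB = \max_i \calB_i$ if a node-independent constant is preferred, as in \cref{thm:1}). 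The only mild subtlety to flag is ensuring the concentration inequality is genuinely uniform over the continuous domain $\calS_i$ and over all $t$ — this is handled by the RKHS structure (the bound is stated pointwise but holds uniformly because $f_i - \mu_{i,t-1}$ again lies in $\mathcal{H}_{k_i}$), so no discretization argument is needed; this is the one place where I would be careful to cite a version of the result that is stated uniformly rather than for a fixed query point.
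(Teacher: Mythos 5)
Your proposal matches the paper's proof essentially exactly: the paper also invokes the standard single-GP RKHS confidence bound (citing Lemma~1 of \citet{sessa2019no}, which is the same self-normalized concentration result you reference) and then applies a union bound over the $m$ node models with $\delta' = \delta/m$ to obtain the stated $\beta_t$. Your additional remarks on replacing $\gamma_{i,t-1}$ by the uniform $\gamma_{t-1}$ and on uniformity over the continuous domain are correct and consistent with what the cited lemma already provides.
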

\begin{proof}
Lemma~\ref{lem:node_confidence} follows directly from \citet[Lemma 1]{sessa2019no}
after applying a union bound so that the statement holds for all $m$ GP models in our system.
\end{proof}
Now we give an upper and lower confidence bound for $r$ at each $t$. Note that since in our setup all $\omega_i$ are assumed bounded in $[0, 1]$, we can use $\noisescale=\frac{1}{4}$.

\begin{lemma}[Reward Confidence Lemma]\label{lem:rew_confidence}
Choose $\delta \in [0, 1]$ and then choose the sequence $\{\beta_t \}^T_{t=1}$ according to \cref{lem:node_confidence}.

$\forall \a , \ai \in \calA \times \calA'$ , $\forall t$ we have with probability $1-\delta$

$$\ucb_t(a, a') - C_t(\delta) \leq r(a, a') \leq \ucb_t(a, a_t')$$

where we define $C_t(\delta)$ as 
$$C_t(\delta) = L_{Y, t} \degree^N  \sqrt{m \E[\snoise]{ \sum_{i=0}^m \norm{\sigma_{i, t-1} (z_{i, t}, a_{i, t}, a_{i,t}')}^2 }} \, .$$

We define $L_{Y, t}=2\beta_t (1+L_f+2\beta_tL_{\sigma})^N$.
\end{lemma}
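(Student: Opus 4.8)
The plan is to bound the gap between the true counterfactual reward $r(\a,\a')$ and its optimistic estimate $\ucb_t(\a,\a')$ by propagating the per-node confidence bounds of Lemma~\ref{lem:node_confidence} through the DAG, layer by layer. The upper bound $r(\a,\a') \le \ucb_t(\a,\a')$ is immediate: by Lemma~\ref{lem:node_confidence}, with probability $1-\delta$ the true functions $\fs$ lie in the plausible set $\calM_t$, so the true model is one of the candidates in the maximization defining $\ucb_t$ in~\eqref{eq:cucb}, hence its expected reward cannot exceed the maximum. The work is all in the lower bound, i.e. showing $\ucb_t(\a,\a') - r(\a,\a') \le C_t(\delta)$. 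For this I would fix the maximizing model $\tfs \in \calM_t$ achieving $\ucb_t(\a,\a')$ and compare, node by node and in topological order, the realization $\tilde x_i$ produced by $\tfs$ under actions $(\a,\a'_t)$ against the realization $x_i$ produced by the true $\fs$ under the same actions (coupling the noise variables $\snoise$ so they are shared). The key recursive claim is a bound on $\E|\tilde x_i - x_i|$ (or its square) in terms of (a) the local confidence width $\beta_t \sigma_{i,t-1}$ at node $i$, and (b) the propagated discrepancies $\E|\tilde x_j - x_j|$ for parents $j \in \pa_i$, using that $\tfi \in \calM_t$ (so $|\tfi(\cdot) - \fofi(\cdot)| \le \beta_t \sigma_{i,t-1}(\cdot)$ at any fixed input) together with the Lipschitz assumptions on $\fofi$ ($L_f$), $\mu_{i,t-1}$ ($L_\mu$), and $\sigma_{i,t-1}$ ($L_\sigma$) to control the change in $\tfi$ when its parent inputs are perturbed. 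This gives a recursion of the form $\mathrm{err}_i \lesssim \beta_t \sigma_{i,t-1} + (L_f + 2\beta_t L_\sigma)\sum_{j\in\pa_i}\mathrm{err}_j$; unrolling it over the at most $N$ layers from any root to $X_m$, with branching at most $\degree$ at each layer, yields the factor $\degree^N (1+L_f+2\beta_t L_\sigma)^N$ and an accumulated sum of $\sigma_{i,t-1}$ terms over all nodes. The leading $2\beta_t$ in $L_{Y,t}$ absorbs the final step $\tilde x_m$ vs $x_m$ at the reward node.

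The remaining steps are bookkeeping: pass from the pathwise/expected $\ell_1$ bound to the stated form by Cauchy–Schwarz, replacing $\sum_i \E|\cdot|$ by $\sqrt{m}\,\sqrt{\E\sum_{i=0}^m \|\sigma_{i,t-1}(z_{i,t},a_{i,t},a'_{i,t})\|^2}$, which is exactly the quantity appearing in $C_t(\delta)$; and note $r(\a,\a') = \E[\snoise]{y\mid\fs,\a,\a'_t}$ and $\ucb_t(\a,\a') = \E[\snoise]{y\mid\tfs,\a,\a'_t}$ so that $\ucb_t(\a,\a') - r(\a,\a') = \E[\snoise]{\tilde x_m - x_m} \le \E|\tilde x_m - x_m| \le C_t(\delta)$. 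The Lipschitz constant bookkeeping that produces $L_{Y,t} = 2\beta_t(1+L_f+2\beta_t L_\sigma)^N$ needs care — in particular the $2\beta_t L_\sigma$ term comes from how $\sigma_{i,t-1}$, evaluated at the \emph{perturbed} parent inputs that the candidate model feeds forward, can itself vary, contributing to the effective Lipschitz constant of $\tfi$ as a function of its parents. This is essentially the argument in~\citet[proof of the analogous lemma]{sussex2022model}, adapted to include the adversary's (fixed, observed) inputs $\a'_t$, which play no active role in the recursion since they are held constant in the counterfactual; I would reference that structure and carry out the per-layer unrolling explicitly.

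The main obstacle I anticipate is getting the recursion and its unrolling exactly right so that the constants match the claimed $C_t(\delta)$ and $L_{Y,t}$: one must (i) handle the fact that the error at a node depends on $\sigma_{i,t-1}$ evaluated at the true versus the counterfactually-perturbed parent values and bound the difference via $L_\sigma$, (ii) correctly track how the $\degree$ parents per node and $N$ layers combine (a naive bound might give $\degree^N$ or $m^N$ depending on whether one counts distinct nodes or paths, and getting the tighter $\degree^N$ requires organizing the sum by topological layer rather than by path), and (iii) ensure the expectation over $\snoise$ is handled cleanly given the coupling. None of these is deep, but the constant-chasing is where errors creep in, so I would set up the layer-indexed induction carefully and verify the base case (root nodes, where $\mathrm{err}_i \le \beta_t \sigma_{i,t-1}$ exactly) and one inductive step in full before asserting the general bound.
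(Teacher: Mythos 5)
Your proposal is correct and follows essentially the same route as the paper: the upper bound via calibration plus the definition of $\ucb_t$, and the lower bound via the uncertainty-propagation argument of \citet[Lemma 4]{sussex2022model}, which the paper simply cites (noting, as you do, that the adversary's actions are held fixed and so play no role in the recursion) while you spell out the layer-by-layer unrolling that underlies it.
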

\begin{proof}
The RHS follows firstly from  \cref{as:well_calibrated_model}
meaning that with probability at least $1-\delta$, $f \in \{\tilde{f}_t\}$. The RHS then follows directly from the definition of $\ucb$ in \cref{eq:cucb}. 

The LHS follows directly from \citet[Lemma 4]{sussex2022model}. The addition of the adversary actions does not change the analysis in this lemma because for a given $t$, $\a_t'$ is fixed for both the true model $f$ and the model in $\{\tilde{f} \}$ that leads to the upper confidence bound. 
\end{proof}

Now we can prove the main theorem. Choose $\delta\in [0, 1]$ and then choose the sequence $\{\beta_t \}^T_{t=1}$ according to \cref{lem:node_confidence} so that \cref{as:well_calibrated_model} holds with probability $1-\frac{\delta}{2}$. First recall that regret is defined as

$$R(T) = \sum_{t=1}^T r(\bar{\a}, \a_t') -  \sum_{t=1}^T r(\a_t, \a_t')$$

where $\bar{\a} = \argmax \sum_{t=1}^T r(\bar{\a}, \a_t')$. Now we can say that with probability at least $1-\frac{\delta}{2}$

\begin{align}
R(T) &\leq \sum_{t=1}^T \min\{1, \ucb_t(\bar{\a}, \a_t')\} - \sum_{t=1}^T \left[ \ucb_t(\a_t, \a_t') - C_t \left( \delta/2 \right) \right]\\
&\leq \sum_{t=1}^T \min\{1, \ucb_t(\bar{\a}, \a_t')\} - \sum_{t=1}^T \ucb_t(\a_t, \a_t') +  \sum_{t=1}^T C_t\left(\delta/2\right) \label{eq:3_terms_ub}
\end{align}
where the first line follows from \cref{lem:rew_confidence}. 

Evaluating the last term we see

\begin{align}
    \sum_{t=1}^T C_t\left(\delta/2\right) 
    &\overset{\tiny \circled{1}}{=} \sqrt{T} \sqrt{\sum_{t=1}^T C_t\left(\delta/2\right)^2} \\
    &\overset{\tiny \circled{2}}{\leq} \sqrt{T} \sqrt{\sum_{t=1}^T L_{Y, t}^2 \degree^{2N} \E[\snoise]{ \sum_{i=0}^m \norm{\sigma_{i, t-1} (z_{i, t}, a_{i, t}, a_{i,t}')}^2}} \\
    &\overset{\tiny \circled{3}}{=} \calO \left( L_{Y, T} \degree^N \sqrt{T m \gamma_T} \right) .
\end{align}

$\tiny \circled{1}$ comes from AM-QM inequality and $\tiny \circled{2}$ comes from plugging in for $C_t(\frac{\delta}{2})$. Finally, $\tiny \circled{3}$ comes from $L_{T, t}$ being weakly increasing in $t$ and from using the same analysis as \cite[Theorem 1]{sussex2022model} to upper bound the sum of $\sigma$s with $\gamma_T$. 

Moreover, we can upper bound the first two terms of \cref{eq:3_terms_ub} using a standard regret bound for the MW update rule (Line~9 in \alg), e.g. from~\citep[Corollary~4.2]{cesa2006prediction}. Indeed, with probability $1-\frac{\delta}{2}$ it holds:

\begin{equation}
\sum_{t=1}^T \min\{1, \ucb(\bar{\a}, \a_t')\} - \sum_{t=1}^T \ucb(\a_t, \a_t') 
 = \calO \left( \sqrt{T \log \abs{\calA}} +  \sqrt{ T \log(2/\delta)} \right) . \label{eq:event2}
 \end{equation}

Using the union bound on the two different probabilistic events discussed so far (\cref{as:well_calibrated_model} and \cref{eq:event2}) we can say that with probability at least $1-\delta$

$$R(T) = \calO \left( \sqrt{T \log \abs{\calA}} +  \sqrt{ T \log(2/\delta)} + L_{Y, T} \degree^N \sqrt{T m \gamma_T} \right) \, . $$

Substituting in for $L_{Y, t}$ and $\beta_t$ gives the result of \cref{thm:1}.

\begin{figure*}[t]
\minipage{0.49\textwidth}
\centering
\textbf{\gpmw}
\vskip 0.1in
\begin{tikzpicture}[scale = 0.8, rotate=90, obs/.style={circle, draw=gray!100, fill=black!0, thick, minimum size=7mm},
dotarget/.style={circle, draw=red!100, fill=black!0, thick, minimum size=7mm},
square/.style={rectangle, draw=black!100, fill=black!0, thick, minimum size=2mm},
adversary/.style={rectangle, dashed, draw=blue!100, fill=black!0, thick, minimum size=2mm},
]
\node[square] (a1) at (0, 1.5) {$a_1$};
\node[square] (a2) at (-1.5, 1.25) {$a_2$};
\node[square] (a3) at (-3.0, 1.0) {$a_3$};
\node[adversary] (a3') at (-3.0, -1.0) {$a_3'$};
\node[adversary] (a2') at (-1.5, -1.25) {$a_2'$};
\node[adversary] (a1') at (0, -1.5) {$a_1'$};

\node[dotarget] (d1) at (1.5, 0) {$Y$};

\draw[->-, thick] (a1) -- (d1);
\draw[->-, thick] (a2) -- (d1);
\draw[->-, thick] (a3) -- (d1);
\draw[->-, thick] (a2') -- (d1);
\draw[->-, thick] (a1') -- (d1);
\draw[->-, thick] (a3') -- (d1);

\end{tikzpicture}
\centering
\vskip 0.1in
\textbf{(a)}
\endminipage\hfill \vline
\minipage{0.49\textwidth}
\centering
\textbf{\alg}
\vskip 0.1in
\begin{tikzpicture}[scale = 0.8, rotate=90, obs/.style={circle, draw=gray!100, fill=black!0, thick, minimum size=7mm},
dotarget/.style={circle, draw=red!100, fill=black!0, thick, minimum size=7mm},
square/.style={rectangle, draw=black!100, fill=black!0, thick, minimum size=2mm},
adversary/.style={rectangle, dashed, draw=blue!100, fill=black!0, thick, minimum size=2mm},
]

\node[square] (a2) at (-4.5, -0.5) {$a_2$};
\node[square] (a3) at (-4.5, -2.5) {$a_3$};
\node[square] (a1) at (-4.5, 1.5) {$a_1$};
\node[square] (a0) at (-4.5, 3.5) {$a_0$};

\node[adversary] (a2') at (-4.5, -1.5) {$a_2'$};
\node[adversary] (a3') at (-4.5, -3.5) {$a_3'$};
\node[adversary] (a1') at (-4.5, 0.5) {$a_1'$};
\node[adversary] (a0') at (-4.5, 2.5) {$a_0'$};

\node[obs] (d2) at (-3, -1) {$X_2$};
\node[obs] (d3) at (-3, -3) {$X_3$};
\node[obs] (d1) at (-3, 1) {$X_1$};
\node[obs] (d0) at (-3, 3) {$X_0$};
\node[obs] (d4) at (-1.5, 1.5) {$X_4$};
\node[obs] (d5) at (-1.5, -1.5) {$X_5$};
\node[dotarget] (d6) at (0, 0) {$Y$};

\draw[->-, thick] (a0) -- (d0);
\draw[->-, thick] (a1) -- (d1);
\draw[->-, thick] (a2) -- (d2);
\draw[->-, thick] (a3) -- (d3);

\draw[->-, thick] (a0') -- (d0);
\draw[->-, thick] (a1') -- (d1);
\draw[->-, thick] (a2') -- (d2);
\draw[->-, thick] (a3') -- (d3);

\draw[->-, thick] (d0) -- (d4);
\draw[->-, thick] (d1) -- (d4);
\draw[->-, thick] (d2) -- (d5);
\draw[->-, thick] (d3) -- (d5);
\draw[->-, thick] (d5) -- (d6);
\draw[->-, thick] (d4) -- (d6);
\end{tikzpicture}
\centering
\vskip 0.1in
\textbf{(b)}
\endminipage \hfill 

\begin{center}
\caption{\looseness -1 An example of a binary tree graph to compare \gpmw and \alg. In \textbf{(a)} we see \gpmw ignores all observations and models $Y$ given all agent and adversary actions, resulting a single high-dimensional GP. Meanwhile in \textbf{(b)}, for the same task, \alg fits a model for every observation resulting in multiple low-dimensional GP models.}
\label{fig:analysis}
\end{center}
\vskip -0.2in
\end{figure*}
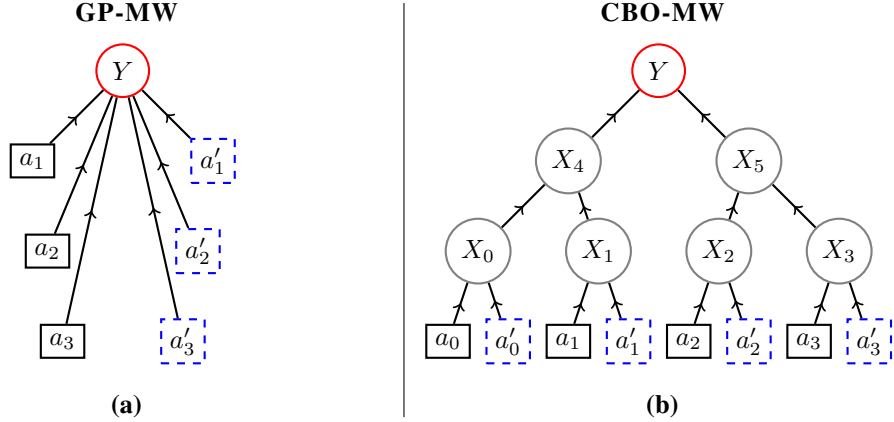

\subsubsection{\textsc{StackelUCB} as a Special Case}
\label{app:special}
In \cref{fig:analysis} and \cref{sec:analysis} we give a comparison between \gpmw and \alg, and see that \gpmw can be seen as a special case of \alg with a one-node causal graph. \textsc{StackelUCB} \cite{sessa2020learning} is another online learning algorithm that can be seen as a special case of \alg. We show the graph in \cref{fig:stackel}. In \textsc{StackelUCB} an agent plays an unknown Stackelberg game against a changing opponent which at time $t$ has representation $a_{0,t}'$. After the agent selects an action $a_{0, t}$, the opponent sees this action and responds based upon a fixed but unknown response mechanism for that opponent: their response is $X_{0, t} = f_0(a_{0,t}', a_{0, t})$. Then, the response, game outcome $Y_t$, and opponent identity are revealed to the agent. The sequence of opponent identities, i.e. $\{a_{0,t}'\}_{t=1}^T$, can be selected potentially adversarially based on knowledge of the agent's strategy.

\begin{figure*}[t]
\centering
\vskip 0.6in
\begin{tikzpicture}[scale = 0.9, obs/.style={circle, draw=gray!100, fill=black!0, thick, minimum size=7mm},
reward/.style={circle, draw=red!100, fill=black!0, thick, minimum size=7mm},
dotarget/.style={circle, dashed, draw=black!100, fill=black!0, thick, minimum size=7mm},
square/.style={rectangle, draw=black!100, fill=black!0, thick, minimum size=2mm},
adversary/.style={rectangle, dashed, draw=blue!100, fill=black!0, thick, minimum size=2mm},
]

\node[square] (a0) at (0, 1.5) {$ a_0$};
\node[obs] (d0) at (1.5, 0) {$X_0$};
\node[reward] (d1) at (3.0, 0) {$Y$};

\node[adversary] (a0') at (1.5, -1.5) {$ a_0'$};

\draw[->-, thick] (a0) -- (d0);
\draw[->-, thick] (a0') -- (d0);
\draw[->-, thick] (a0) -- (d1);
\draw[->-, thick] (d0) -- (d1);
\end{tikzpicture}
\centering
\begin{center}
\caption{\looseness -1  The causal graph corresponding to \textsc{StackelUCB}, a special case of our setup. The opponent action $X_0$ is played based on the opponent type $a_0'$ and as a response to the agent's action $a_0$. The reward $Y$ depends on the agent and opponent's action. 
}
\label{fig:stackel}
\end{center}
\vskip -0.35in
\end{figure*}
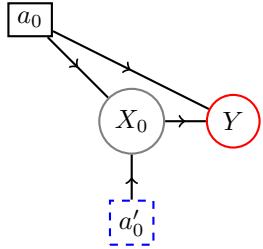
\subsubsection{Dependence of $\beta_T$ on $T$ for Particular Kernels}
\label{app:particular_kernels} 
In \cref{thm:1}, there is a dependence of the bound on $\gamma_T$. If $\gamma_T$ scales with at worst $\calO\left(\sqrt{T} \right)$, then the overall bound will not be in $T$, resulting in \alg being no-regret. $\gamma_T$ will depend on the kernel of the GP used to model each system component. For simplicity, in this section we'll assume that the same kernel is used for modelling all nodes, though if different kernels are used one just needs to consider the most complex (worst scaling of $\gamma_T$ with $T$)

For $\gamma_T$ corresponding to the linear kernel and squared exponential kernel, we have sublinear regret regardless of $N$ because $\gamma_T$ will be only logarithmic in $T$. In particular, a linear kernel leads to {$\gamma_{T} = \mathcal O \left( (\degree +  1)\log T \right)$} and a squared exponential kernel leads to  {$\gamma_{T} = \mathcal O \left( (\degree +1)(\log T)^{\degree+2}\right)$}.

However, for a \matern \ kernel, where the best known bound is $\gamma_T = {\calO \left((T)^c log(T) \right)}$ with hyperparameter $0 < c < 1$, the cumulative regret bound will not be sublinear if $N$ and $c$ are sufficiently large. A similar phenomena with the \matern \ kernel appears in the guarantees of \citet{curiEfficientModelBasedReinforcement2020} which use GP models in model-based reinforcement learning and in \citet{sussex2022model}. 
\subsection{Incorporating Contexts into \alg}\label{app:contextual}

Our approach can be easily extended to a setting where $\a_t'$, the adversary actions at time $t$, are observed before the agent chooses $\a_t$. In this setting the adversary actions could be thought of as contexts. Our method for computing $\ucb$ can be plugged into the algorithm of \citet{sessa2020contextual}. Their algorithm maintains a separate set of weights for every context unless two contexts are `close' -- correspond to a similar expected reward for all possible $\a_t$. 
\section{An efficient variant of \alg for large action spaces}
\label{sec:appendix_distributed}
\subsection{The Distributed \alg (\subalg) algorithm}

For ease of readability in the pseudocode we pull-out some key functionality of the MW algorithm, which we put into a class called \mwu described in \cref{alg:mwu}. Each agent is an instance of this class, i.e., each agent maintains its own set of weights over its actions, and updates to these instances are coordinated by \subalg as described in \cref{alg:subalg}. Conceptually it can be thought of as $m$ instances of \alg, where the action spaces of other agents are absorbed into $\calA'$. While for presenting the algorithm we always decompose $\A$ as $\A_1 \times \dots \times \A_m$ and have each agent control the intervention on one node, one could in practice choose to decompose the action space in a different way. A simple example would be having agent one control $\A_1 \times \A_2$ and thus designing the intervention for both $X_1, X_2$

When computing the $\ucb$ for a single agent $i \in [m]$, we will find the following notation convenient. Let $\ait \in \A_{i}$ be the action chosen by agent $i$ at time $t$. $\a_{-i, t} \in \A_{-i} \subseteq \A$ are the actions chosen at time $t$ by all agents except agent $i$. Note that since the subspaces of the action space each agent controls are non-overlapping, $\A = \Ai \cup \A_{-i}$. When agent $i$ chooses actions in $\A_i$, for convenience we will from now on represent it as a vector in $\calA$ with $0$ at all indexes the agent cannot intervene. We do similarly for $\a_{-i,t}$. Then we use the notation $\ucb_t(\ait, \a_{-i,t}, \a_t') = \ucb_t(\ait + \a_{-j, t}, \a_t')$. 

\begin{figure}[t]
    \centering
    \begin{subfigure}[b]{0.49\textwidth}
    \centering
        \begin{subfigure}[b]{\textwidth}
            \includegraphics[width=\textwidth]{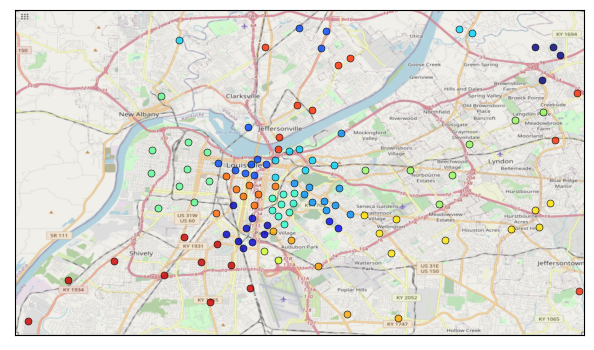}
            \caption{}
        \end{subfigure}
     \end{subfigure}
     \begin{subfigure}[b]{0.49\textwidth}
     \centering
        \begin{tikzpicture}[scale = 0.9, obs/.style={circle, draw=gray!100, fill=black!0, thick, minimum size=9mm},
reward/.style={circle, draw=red!100, fill=black!0, thick, minimum size=7mm},
dotarget/.style={circle, dashed, draw=black!100, fill=black!0, thick, minimum size=7mm},
square/.style={rectangle, dashed, draw=blue!100, fill=black!0, thick, minimum size=2mm},
loose/.style={rectangle, draw=black!0, fill=black!0, thick, minimum size=2mm},
adversary/.style={rectangle, dashed, draw=blue!100, fill=black!0, thick, minimum size=2mm}
]

\node[obs] (xr11) at (-0.25, 1.0) {\tiny $X_{1}^{R_1}$};
\node[loose] (dots1) at (-0.25, 0.0) {\scriptsize $\dots$};
\node[obs] (xr12) at (-0.25, -1.0) {\tiny $X_{\abs{R_1}}^{R_1}$};
\node[obs] (xr1) at (2, 0.0) {\tiny $X_{R_1}$};
\node[loose] (r1) at (0.75, 1.75) {$R_1$};

\node[obs] (xr21) at (-0.25, -3.5) {\tiny $X_{1}^{R_2}$};
\node[loose] (dots2) at (-0.25, -4.5) {\scriptsize $\dots$};
\node[obs] (xr22) at (-0.25, -5.5) {\tiny $X_{\abs{R_2}}^{R_2}$};
\node[obs] (xr2) at (2, -4.5) {\tiny $X_{R_2}$};
\node[loose] (r2) at (0.75, -2.75) {$R_2$}; 

\node[adversary] (a') at (2, -2.25) {$\a'$};

\node[reward] (y) at (4, -2.25) {$Y$};

\draw[->-, thick] (xr11) -- (xr1);
\draw[->-, thick] (xr12) -- (xr1);
\draw[->-, thick] (a') -- (xr1);

\draw[->-, thick] (xr21) -- (xr2);
\draw[->-, thick] (xr22) -- (xr2);
\draw[->-, thick] (a') -- (xr2);

\draw[->-, thick] (xr1) -- (y);
\draw[->-, thick] (xr2) -- (y);

\draw (-1,2) -- (-1,-2) -- (1,-2) -- (1,2) -- (-1,2); 
\draw (-1,-2.5) -- (-1,-6.5) -- (1,-6.5) -- (1,-2.5) -- (-1,-2.5); 
\end{tikzpicture}
        \caption{}
     \end{subfigure}
     
     \caption{(\textbf{a}) The allocation of depots to regions. Depots of the same colour belong to the same region. (\textbf{b}) For our empirical evaluation of \subalg we use a graph that computes trips for each region and then sums these up to get total trips. Here we show a simplified version for 2 regions $R_1$ and $R_2$. The total trips in the first region $X_{R_1}$ only depends on the number of bikes allocated to each depot in $R_1$ given by $X^{R_1}_1 \dots X^{R_1}_{\abs{R_1}}$. This reduces the dimensionality of GPs used in the model. For simplicity we have removed the agent's action nodes from the graph since the relationship between actions and bike allocations is a fixed known mapping. }
    \label{fig:subalgo}
\end{figure}

\begin{algorithm}[t]
    \caption{Multiplicative Weights Update (\mwu)}
    \label{alg:mwu}
    \begin{algorithmic}[1]  
        \Require set $\calA_i$ where $\abs{\calA_i} = K_i$, learning rate $\eta$
        \State Initialize $\bw^1 = \frac{1}{K_i}(1, \ldots, 1) \in \mathbb{R}^{K_i}$
        \Function{play\_action}{}
            \State $\bm{p} = \bm{w} \cdot 1/\left(\sum_{j=1}^{K_i} \bm{w}[j]\right)$ 
            \State $a \sim \bf{p}$
            \State \Return a \Comment{sample action}
        \EndFunction \\
        \Function{update}{$f(\cdot)$}
            \State $\bm{f} = \min(\bm{1}, [f(a)]_{a\in\calA_i}) \in \mathbb{R}^K$  \Comment{rewards vector}
            \State $\bf{w} = \bf{w} \cdot \exp{(\eta \bm{f})}$ \Comment{MW update}
            \State \Return
        \EndFunction
        
    \end{algorithmic}
\end{algorithm}

\begin{algorithm}[t]
    \caption{Distributed Causal Bayesian Optimization Multiplicative Weights (\subalg)}
    \label{alg:subalg}
    \begin{algorithmic}[1]  
        \Require parameters $\tau, \{\beta_{t}\}_{t\geq1}, \G, \snoiserv$, kernel functions $k_{i}$ and prior means $\mu_{i,0} = 0$, $\forall i \in [m]$ 
        \State $\text{Algo}^i \leftarrow \mwu(\calA^i), \ i = 1, \dots , m$, \cref{alg:mwu} \ \Comment{initialize agents}
        \For{$t = 1, 2, \hdots$}
            \State $\text{Algo}_i.\text{PLAY\_ACTION}(),\  i = 1, \dots, m$ \Comment{sample actions and perform interventions}
            \State Observe samples $\{\zit, \sit, \ait'\}_{i=0}^m$
           \State Update posteriors $\{\mu_{i,t}(\cdot), \sigma^2_{i,t}(\cdot) \}_{i=0}^m$ 
           \For{$i \in 1, \dots, m$}
            \For{$a_i \in \calA_i$}
                \State Compute $\text{ucb}_{i, t}(\ai) = \ucb_t(\ai, \a_{-i, t}, \a_t')$ for $\ai \in \calA_i$  using \cref{alg:oracle}
            \EndFor
            \State $\text{Algo}_i.\text{UPDATE}(\text{ucb}_{i, t}(\cdot)), i= 1, \dots, m$
        \EndFor
        
        \EndFor 
    \end{algorithmic}
\end{algorithm}

\subsection{Approximation guarantees for monotone submodular rewards}

In this section we show that \subalg enjoys provable approximation guarantees in case of monotone and DR-submodular rewards. Both such notions are defined below. \looseness=-1

\begin{definition}[Monotone Function]
A function $f:\calA \subseteq \mathbb{R}^m \rightarrow \mathbb{R}$ is monotone if for $\bm{x} \leq \bm{y}$,
$$f(\bm{x}) \leq f(\bm{y}).$$
\end{definition}

\begin{definition}[DR-Submodularity, \citep{bian2017continuous}]
A function $f:\calA \subseteq \mathbb{R}^m \rightarrow \mathbb{R}$ is DR-submodular if, for all $\bm{x} \leq \bm{y}\in \calA$, $\forall i \in [m], \forall k \geq 0$ such that $(\bm{x} + k \bm{e}_i)$ and $(\bm{y} + k \bm{e}_i) \in \calA$, 
$$f(\bm{x} + k \bm{e}_i) - f(\bm{x}) \geq f(\bm{y} + k \bm{e}_i) - f(\bm{y}).$$
\end{definition}

DR-submodularity is a generalization of the more common notion of a submodular set function to continuous domains \cite{bach2019submodular}. 
For our analysis, in particular, we assume that for every $\at' \in \calA'$, the reward function $r(\at, \at')$ is monotone DR-submodular in $\at$.

We consider ACBO as a game played among all the distributed agents and the adversary.  Our guarantees are then based on the results of~\citet{sessa21online} and depend on the following notions of average and worst-case game curvature. 

\begin{definition}[Game Curvature]
    Consider a sequence of adversary actions $\a'_1, \dots, \a'_T$. We define the average and worst-case game curvature as
  $$c_{avg}(\{\at'\}_{t=1}^T) = 1 - \inf_i \frac{\sum_{t=1}^T [\nabla r(2 \a_{max}, \at')]_i}{\sum_{t=1}^T [\nabla r(\bm{0}, \at')]_i} \in [0,1],$$
    $$c_{wc}(\{\at'\}_{t=1}^T) = 1 - \inf_{t,i} \frac{[\nabla r(2 \a_{max} , \at')]_i}{[\nabla r(\bm{0}, \at')]_i} \in [0,1],$$
    where $\a_{max} = a_{max} \bm 1$ and $a_{max}$ is the largest intervention value a single agent can assign at any index. If $2 \a_{max}$ is outside the domain of $r$, then the definition can be applied to a monotone extension of $r$ over $\mathbb{R}^m$. A definition of game curvature for non-differentiable $r$ is given in the appendix of \citet{sessa21online}.  \looseness-1

\end{definition}

Curvature measures how close $r(\cdot, \a'_{:, t})$ is to being linear in the agents' actions, with $c_{avg} = c_{wc} = 0$ coinciding with a completely linear function. The closer $r(\cdot, \a'_{:, t})$ is to linear, generally the easier the function is to optimize in a distributed way, because a linear function is separable in its inputs. $c_{wc}$ is the worst-case curvature of $r(\cdot, \a'_{:, t})$ across all rounds $t$, while $c_{avg}$ is the average curvature across rounds. The curvature of $r(\cdot, \a'_{:, t})$ will change with $t$ because $\a'_{:, t}$ will change across rounds.

We will without loss of generality assume that $r(\mathbf{0}, \a') = 0$ for all $\a'$. If this did not hold then $r(\mathbf{0}, \a')$ could simply be subtracted from all observations to make it true. 
Then, we can show the following. \looseness-1

\begin{theorem}
    \label{thm:2}
    Consider the setting of \cref{thm:1} but with the additional monotone submodularity and curvature assumptions made in this section. Assume $\abs{\calA_i} = K$ for all $i$. Then, if actions are played according to \subalg with $\beta_t = \mathcal{O}\left( \calB +\sqrt{\gamma_{t-1} + \log{(m/\delta)}}\right)$ and $\eta= \sqrt{8 \log(K)/T}$ then with probability at least $1-\delta$,
    \begin{align}
        \sum_{t=1}^T r(\a_t, \a_t') \geq \: & \alpha \cdot \textsc{OPT} 
        - m \cdot \mathcal{O}\left(\left(\calB + \sqrt{\gamma_T + \log(m/\delta)} \right)^{N+1} \degree^N L_{\sigma}^N L_f^N m \sqrt{T \gamma_T} \right) \\
        & - m \cdot\mathcal{O}\left(\sqrt{T\log{K}} + \sqrt{T\log(2m/\delta)}\right), \nonumber
    \end{align}
    with 
    $$\alpha = \max \left\{1 - c_{avg}(\{\at'\}_{t=1}^T), \left(1+c_{wc}(\{\at'\}_{t=1}^T)\right)^{-1} \right\}$$ 
    and $\textsc{OPT} = \max_{\a \in \calA} \sum_{t=1}^T r(\at, \at')$ is the expected reward achieved by playing the best fixed interventions in hindsight. $\calB$ and $\gamma_T$ are defined the same as in \cref{thm:1}.
\end{theorem}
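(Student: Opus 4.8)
The plan is to combine three ingredients: (i) the per-node UCB confidence machinery already developed for Theorem~\ref{thm:1} (Lemmas~\ref{lem:node_confidence} and~\ref{lem:rew_confidence}), (ii) the no-regret property of each agent's \mwu instance applied to the optimistic reward estimates $\ucb_t(\ai, \a_{-i,t}, \a_t')$, and (iii) the curvature-based analysis of distributed submodular games from~\citet{sessa21online}, which converts a collection of individual no-regret guarantees into a global $\alpha$-approximation guarantee. First I would fix $\delta$, choose $\{\beta_t\}$ per Lemma~\ref{lem:node_confidence} with a union bound over the $m$ nodes (and over the $m$ agents' \mwu failure events), so that on a joint event of probability at least $1-\delta$ both the reward confidence bounds of Lemma~\ref{lem:rew_confidence} and the MW regret bounds of~\citep[Corollary~4.2]{cesa2006prediction} hold simultaneously.

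Next I would set up the game-theoretic bookkeeping. For each agent $i$, interpret the sequence of functions $a_i \mapsto \hat y^t_{\ai} := \min\{1, \ucb_t(\ai, \a_{-i,t}, \a_t')\}$ as the (optimistic, surrogate) reward stream fed into \mwu; by the standard MW guarantee each agent has low regret against any fixed $a_i^\star \in \calA_i$ with respect to these surrogate rewards, incurring $\mathcal O(\sqrt{T\log K} + \sqrt{T\log(2m/\delta)})$ per agent. I would then invoke the result of~\citet{sessa21online} that, for a monotone DR-submodular game, if every agent plays a no-regret sequence then the joint play satisfies $\sum_t r(\a_t,\a_t') \ge \alpha\cdot\textsc{OPT} - \sum_i (\text{agent } i\text{'s surrogate regret}) - (\text{surrogate-vs-true approximation error})$, with $\alpha = \max\{1 - c_{avg}, (1+c_{wc})^{-1}\}$. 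The curvature quantities enter exactly here and nowhere else, so this step is essentially a citation once the surrogate rewards are identified with the ``utilities'' in their framework.

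The third step is to control the gap between the surrogate (optimistic) rewards and the true rewards $r(\cdot,\cdot)$. By Lemma~\ref{lem:rew_confidence}, for every round $t$ and every action argument, $0 \le \ucb_t(\a,\a_t') - r(\a,\a_t') \le C_t(\delta)$, and the telescoping/AM--QM argument already carried out in the proof of Theorem~\ref{thm:1} gives $\sum_{t=1}^T C_t(\delta/2) = \mathcal O\big((\calB + \sqrt{\gamma_T + \log(m/\delta)})^{N+1}\degree^N L_\sigma^N L_f^N m\sqrt{T\gamma_T}\big)$. Each of the $m$ agents' analyses picks up one such term (the surrogate rewards differ from true rewards by at most $C_t$ uniformly), which is where the leading factor of $m$ multiplying both error terms in the statement comes from; summing the surrogate MW regrets likewise contributes the $m\cdot\mathcal O(\sqrt{T\log K} + \sqrt{T\log(2m/\delta)})$ term. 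Substituting the chosen $\beta_t$ and $L_{Y,t}$, and folding the $\min\{1,\cdot\}$ truncation (which only decreases the estimates and is harmless since $r\in[0,1]$), yields the claimed bound.

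The main obstacle I anticipate is making the reduction to~\citet{sessa21online} fully rigorous: their framework assumes agents optimize with respect to a common utility, whereas here each agent optimizes an \emph{optimistic surrogate} that (a) changes every round, (b) is only an upper bound on the true reward, and (c) is itself the output of a nonconvex oracle. One must verify that their curvature argument only uses monotonicity and DR-submodularity of the \emph{true} $r(\cdot,\a_t')$ (which we assume) together with the \emph{no-regret} property of each agent's play against the surrogate stream (which MW provides), and that the surrogate-to-true slack can be absorbed additively and uniformly via $C_t(\delta)$ without interacting with the curvature constant $\alpha$. Secondarily, one must confirm that truncating $\ucb_t$ at $1$ and the fact that $r(\mathbf 0,\a')=0$ is assumed WLOG are compatible with their monotone-extension convention for $2\a_{\max}$ lying outside the domain; this is routine but needs a sentence.
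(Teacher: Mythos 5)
Your proposal is correct and follows essentially the same route as the paper: define per-agent regrets with respect to the optimistic $\ucb_t$ feedback, invoke \citet{sessa21online} (Theorem~1) to convert the agents' no-regret play into the $\alpha\cdot\textsc{OPT}$ guarantee minus $m\sum_t C_t(\delta/2)$ plus the sum of agent regrets, then bound the former via the AM--QM/$\gamma_T$ argument from Theorem~\ref{thm:1} and the latter via the MW bound of \citep[Corollary~4.2]{cesa2006prediction}, finishing with a union bound. The concern you flag about extending the \citet{sessa21online} argument from a single-GP UCB to the causal UCB is exactly the point the paper addresses, noting their proof only requires a valid upper confidence bound of the form in Lemma~\ref{lem:rew_confidence}.
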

\begin{proof}

We will find useful the notation of the regret of an individual agent $i\in[m]$. We will consider the regret of each agent to be not in terms of the reward $r$ but in terms of the feedback that agent receives: the UCB. We therefore define 
$$R^i(T) = \max_{a} \sum_{t=1}^T \ucb_t(a, \a_{-i, t}, \a'_t) - \sum_{t=1}^T\ucb_t(\ait, \a_{-i, t}, \a'_t).$$
It can be thought of as the regret of agent $i$ compared to the best fixed action in hindsight, given that the actions of all other agents are fixed, and the agent is trying to maximize the sum of UCBs. 

Using the above definitions, and provided that $\ucb_t(\cdot)$ are a valid upper confidence bound functions on the reward (according to \cref{lem:rew_confidence}), 
we can directly apply \citep[Theorem~1]{sessa21online}. This shows that with probability $1-\frac{\delta}{2}$, the overall reward obtained by \subalg is lower bounded by 
$$\sum_{t=1}^T r(\a_t, \a_t') \geq \alpha \cdot \textsc{OPT} 
        - m \sum_{t=1}^T C_t\left(\frac{\delta}{2}\right)
        - \sum_{i=1}^m R^i(T),$$
where $\alpha$ is defined in~\Cref{thm:2} and $C_t(\delta/2)$ is as defined in \cref{lem:rew_confidence}. We note that \citet{sessa21online} stated their theorem for the case where the UCB was computed using a single GP model (our setting with only a reward node and parent actions), however the proof is identical when any method to compute the UCB is used with an accompanying confidence bound in the form of \cref{lem:rew_confidence}. 

Then, we can obtain the bound in the theorem statement by further bounding the agents' regrets $R^i(T)$. 
Indeed, because each agent updates its strategy (Line~10 in \cref{alg:subalg}) using the MW rule (\cref{alg:mwu}), we can bound each $R^i(T)$ with probability $1-\frac{\delta}{2m}$ using \cref{eq:event2}. We can also substitute in for $C_t(\delta/2)$ using \cref{lem:rew_confidence}. Via a union bound we get our final result with probability $1-\delta$.\looseness=-1 
\end{proof}

\section{Experiments}
\label{app:experiments}

\subsection{Function Networks} 

We evaluate \alg on 8 environments that have been modified from existing function networks \cite{astudilloBayesianOptimizationFunction2021}. These have been previously used to study CBO algorithms \cite{sussex2022model}. We modify each environment in two ways (Perturb and Penny) in order to incorporate an adversary, resulting in 8 environments total. For all environments we tried to make the fewest modifications possible when incorporating the adversary in a way that made the game nontrivial while maintaining the spirit of the original function network.

Perturb environments allow the adversary to modify some of the input actions of the agents. Penny environments incorporate some element of the classic matching pennies game into the environment. If node $X_i$ has an adversary action parent, part of the mechanism for generating $X_i$ will involve multiplying another parent by the adversary action. Because the adversary can select negative actions (see below), this results in a dynamic similar to matching pennies. 

Throughout the setup and theory we assume that there is one action per node for simplicity. For many of the function networks experiments there may be $0$ or more than one action per node. Similar theoretical results for this case can also be shown using our analysis. 

In all environments we map the discrete action space $[0, K-1]$ to the continuous domain of the original function network. Using $a$ as the continuous action at a single node and $\da$ as the discrete action input at that node, we always use mapping $a = \left( \frac{\da}{K-1} - 0.5 \right) C_1 + C_2$, where $C_1, C_2$ defines some environment specific linear map that usually maps to the input space of the original function network in \citet{astudilloBayesianOptimizationFunction2021}. We use the same mapping for adversary actions in Perturb environments 

For the adversary's actions in Penny environments, we use a more complicated mapping from $\da'$ to $a'$ to ensure that the adversary normally cannot select $0$, which would result in a trivial game that the adversary can solve by always playing $\a' = \mathbf{0}$. If $K$ is even we use
$$a' = \left( \frac{\da'}{K-1} (1-2\epsilon) \epsilon - 0.5 \right) C_1 + C_2 ,$$
where $\epsilon = 0.05$. If $K$ is odd we use
$$a' = \left( \frac{\da'+0.5}{K-1} (1-2\epsilon)  \epsilon - 0.5 \right) C_1 + C_2,$$
where $\epsilon = 0.05$.

DAGs illustrating the casual structure of each environment are shown in \cref{fig:task_dags}. For some environments, we made the number of nodes smaller than the environment's counterpart in \citet{astudilloBayesianOptimizationFunction2021} for computational reasons. We give the full SCM for each environment at the end of this subsection. 

To select hyperparamaters for each method we perform a small hyperparameter sweep to select the best hyperparameters, before fixing the best hyperparameters and running 10 repeats on fresh seeds. On all repeats the agent is initialized with $2m+1$ samples of uniformly randomly sampled $\a$ and $\a'$, where $m$ is the number of action nodes. 

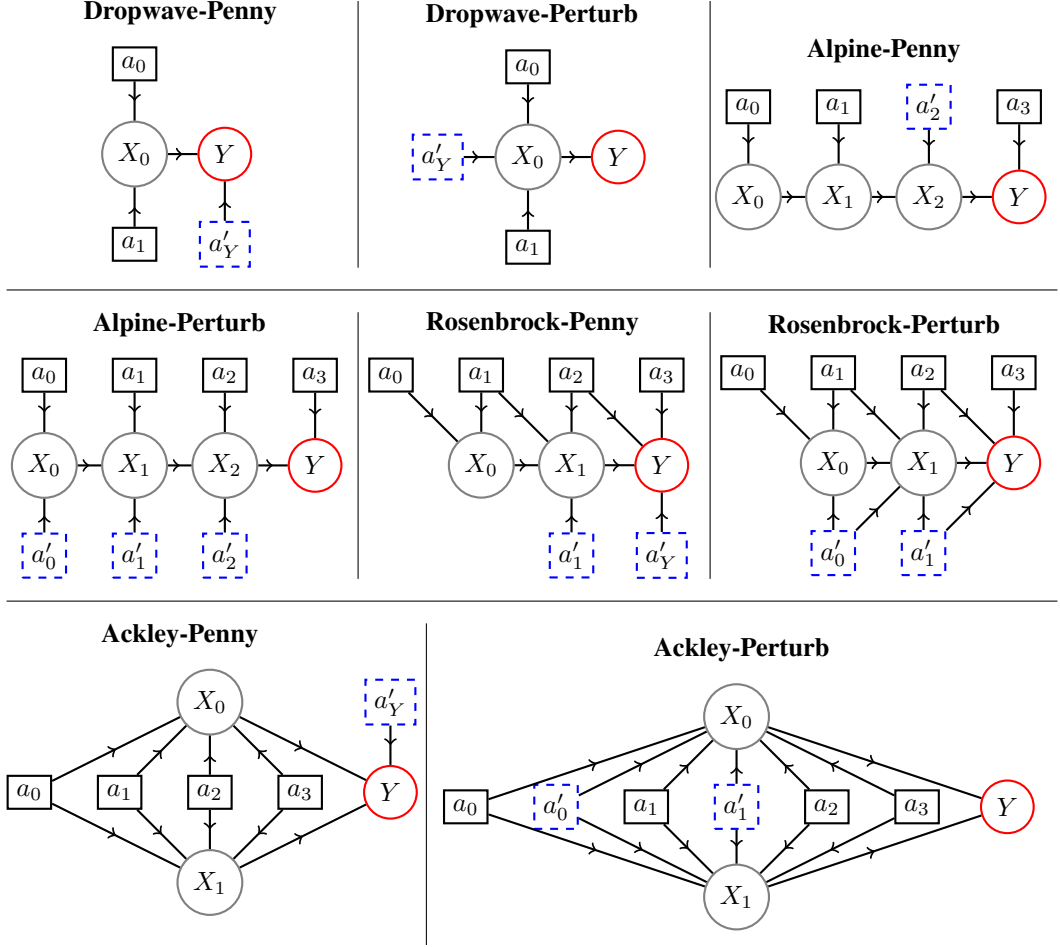
\begin{figure*}[t]
\minipage{0.33\textwidth}
\centering
\textbf{Dropwave-Penny}
\vskip 0.1in
\begin{tikzpicture}[scale = 0.8, rotate=0, obs/.style={circle, draw=gray!100, fill=black!0, thick, minimum size=7mm},
dotarget/.style={circle, draw=red!100, fill=black!0, thick, minimum size=7mm},
square/.style={rectangle, draw=black!100, fill=black!0, thick, minimum size=2mm},
adversary/.style={rectangle, dashed, draw=blue!100, fill=black!0, thick, minimum size=2mm},
]

\node[square] (a0) at (0, 1.5) {$a_0$};
\node[square] (a1) at (0, -1.5) {$a_1$};
\node[obs] (d0) at (0, 0) {$X_0$};
\node[adversary] (a0') at (1.5, -1.5) {$a_Y'$};
\node[dotarget] (d1) at (1.5, 0) {$Y$};

\draw[->-, thick] (a0) -- (d0);
\draw[->-, thick] (a1) -- (d0);
\draw[->-, thick] (d0) -- (d1);
\draw[->-, thick] (a0') -- (d1);
\end{tikzpicture}
\endminipage\hfill \vline
\minipage{0.33\textwidth}
\centering
\textbf{Dropwave-Perturb}
\vskip 0.1in
\begin{tikzpicture}[scale = 0.8, rotate=0, obs/.style={circle, draw=gray!100, fill=black!0, thick, minimum size=7mm},
dotarget/.style={circle, draw=red!100, fill=black!0, thick, minimum size=7mm},
square/.style={rectangle, draw=black!100, fill=black!0, thick, minimum size=2mm},
adversary/.style={rectangle, dashed, draw=blue!100, fill=black!0, thick, minimum size=2mm},
]

\node[square] (a0) at (0, 1.5) {$a_0$};
\node[square] (a1) at (0, -1.5) {$a_1$};
\node[obs] (d0) at (0, 0) {$X_0$};
\node[adversary] (a0') at (-1.5, 0) {$a_Y'$};
\node[dotarget] (d1) at (1.5, 0) {$Y$};

\draw[->-, thick] (a0) -- (d0);
\draw[->-, thick] (a1) -- (d0);
\draw[->-, thick] (d0) -- (d1);
\draw[->-, thick] (a0') -- (d0);
\end{tikzpicture}
\endminipage\hfill \vline
\minipage{0.33\textwidth}
\centering
\textbf{Alpine-Penny}
\vskip 0.1in
\begin{tikzpicture}[scale = 0.8, rotate=0, obs/.style={circle, draw=gray!100, fill=black!0, thick, minimum size=7mm},
dotarget/.style={circle, draw=red!100, fill=black!0, thick, minimum size=7mm},
square/.style={rectangle, draw=black!100, fill=black!0, thick, minimum size=2mm},
adversary/.style={rectangle, dashed, draw=blue!100, fill=black!0, thick, minimum size=2mm},
]

\node[square] (a0) at (0, 1.5) {$a_0$};
\node[square] (a1) at (1.5, 1.5) {$a_1$};
\node[adversary] (a2) at (3, 1.5) {$a_2'$};
\node[square] (a3) at (4.5, 1.5) {$a_3$};
\node[obs] (d0) at (0, 0) {$X_0$};
\node[obs] (d1) at (1.5, 0) {$X_1$};
\node[obs] (d2) at (3, 0) {$X_2$};
\node[dotarget] (d3) at (4.5, 0) {$Y$};

\draw[->-, thick] (d0) -- (d1);
\draw[->-, thick] (d1) -- (d2);
\draw[->-, thick] (d2) -- (d3);
\draw[->-, thick] (a0) -- (d0);
\draw[->-, thick] (a1) -- (d1);
\draw[->-, thick] (a2) -- (d2);
\draw[->-, thick] (a3) -- (d3);
\end{tikzpicture}
\endminipage\hfill

\vskip 0.1in

\hrulefill
\vskip 0.1in
\minipage{0.33\textwidth}
\centering
\textbf{Alpine-Perturb}
\vskip 0.1in
\begin{tikzpicture}[scale = 0.8, rotate=0, obs/.style={circle, draw=gray!100, fill=black!0, thick, minimum size=7mm},
dotarget/.style={circle, draw=red!100, fill=black!0, thick, minimum size=7mm},
square/.style={rectangle, draw=black!100, fill=black!0, thick, minimum size=2mm},
adversary/.style={rectangle, dashed, draw=blue!100, fill=black!0, thick, minimum size=2mm},
]

\node[square] (a0) at (0, 1.5) {$a_0$};
\node[square] (a1) at (1.5, 1.5) {$a_1$};
\node[square] (a2) at (3, 1.5) {$a_2$};
\node[square] (a3) at (4.5, 1.5) {$a_3$};

\node[adversary] (a0') at (0, -1.5) {$a_0'$};
\node[adversary] (a1') at (1.5, -1.5) {$a_1'$};
\node[adversary] (a2') at (3, -1.5) {$a_2'$};

\node[obs] (d0) at (0, 0) {$X_0$};
\node[obs] (d1) at (1.5, 0) {$X_1$};
\node[obs] (d2) at (3, 0) {$X_2$};
\node[dotarget] (d3) at (4.5, 0) {$Y$};

\draw[->-, thick] (d0) -- (d1);
\draw[->-, thick] (d1) -- (d2);
\draw[->-, thick] (d2) -- (d3);
\draw[->-, thick] (a0) -- (d0);
\draw[->-, thick] (a1) -- (d1);
\draw[->-, thick] (a2) -- (d2);
\draw[->-, thick] (a3) -- (d3);

\draw[->-, thick] (a0') -- (d0);
\draw[->-, thick] (a1') -- (d1);
\draw[->-, thick] (a2') -- (d2);
\end{tikzpicture}
\endminipage\hfill \vline
\minipage{0.33\textwidth}
\centering
\textbf{Rosenbrock-Penny}
\vskip 0.1in
\begin{tikzpicture}[scale = 0.8, rotate=0, obs/.style={circle, draw=gray!100, fill=black!0, thick, minimum size=7mm},
dotarget/.style={circle, draw=red!100, fill=black!0, thick, minimum size=7mm},
square/.style={rectangle, draw=black!100, fill=black!0, thick, minimum size=2mm},
adversary/.style={rectangle, dashed, draw=blue!100, fill=black!0, thick, minimum size=2mm},
]

\node[square] (a0) at (-1.5, 1.5) {$a_0$};
\node[square] (a1) at (0, 1.5) {$a_1$};
\node[square] (a2) at (1.5, 1.5) {$a_2$};
\node[square] (a3) at (3, 1.5) {$a_3$};

\node[adversary] (a1') at (1.5, -1.5) {$a_1'$};
\node[adversary] (ay') at (3, -1.5) {$a_Y'$};

\node[obs] (d0) at (0, 0) {$X_0$};
\node[obs] (d1) at (1.5, 0) {$X_1$};
\node[dotarget] (d2) at (3, 0) {$Y$};

\draw[->-, thick] (d0) -- (d1);
\draw[->-, thick] (d1) -- (d2);

\draw[->-, thick] (a0) -- (d0);
\draw[->-, thick] (a1) -- (d0);

\draw[->-, thick] (a1) -- (d1);
\draw[->-, thick] (a2) -- (d1);

\draw[->-, thick] (a2) -- (d2);
\draw[->-, thick] (a3) -- (d2);

\draw[->-, thick] (a1') -- (d1);
\draw[->-, thick] (ay') -- (d2);

\end{tikzpicture}
\endminipage\hfill \vline
\minipage{0.33\textwidth}
\centering
\textbf{Rosenbrock-Perturb}
\vskip 0.1in
\begin{tikzpicture}[scale = 0.8, rotate=0, obs/.style={circle, draw=gray!100, fill=black!0, thick, minimum size=7mm},
dotarget/.style={circle, draw=red!100, fill=black!0, thick, minimum size=7mm},
square/.style={rectangle, draw=black!100, fill=black!0, thick, minimum size=2mm},
adversary/.style={rectangle, dashed, draw=blue!100, fill=black!0, thick, minimum size=2mm},
]

\node[square] (a0) at (-1.5, 1.5) {$a_0$};
\node[square] (a1) at (0, 1.5) {$a_1$};
\node[square] (a2) at (1.5, 1.5) {$a_2$};
\node[square] (a3) at (3, 1.5) {$a_3$};

\node[adversary] (a0') at (0, -1.5) {$a_0'$};
\node[adversary] (a1') at (1.5, -1.5) {$a_1'$};

\node[obs] (d0) at (0, 0) {$X_0$};
\node[obs] (d1) at (1.5, 0) {$X_1$};
\node[dotarget] (d2) at (3, 0) {$Y$};

\draw[->-, thick] (d0) -- (d1);
\draw[->-, thick] (d1) -- (d2);

\draw[->-, thick] (a0) -- (d0);
\draw[->-, thick] (a1) -- (d0);

\draw[->-, thick] (a1) -- (d1);
\draw[->-, thick] (a2) -- (d1);

\draw[->-, thick] (a2) -- (d2);
\draw[->-, thick] (a3) -- (d2);

\draw[->-, thick] (a0') -- (d0);
\draw[->-, thick] (a0') -- (d1);
\draw[->-, thick] (a1') -- (d1);
\draw[->-, thick] (a1') -- (d2);

\end{tikzpicture}
\endminipage\hfill

\vskip 0.1in
\hrulefill
\vskip 0.1in
\minipage{0.33\textwidth}
\centering
\textbf{Ackley-Penny}
\vskip 0.1in
\begin{tikzpicture}[scale = 0.8, rotate=0, obs/.style={circle, draw=gray!100, fill=black!0, thick, minimum size=7mm},
dotarget/.style={circle, draw=red!100, fill=black!0, thick, minimum size=7mm},
square/.style={rectangle, draw=black!100, fill=black!0, thick, minimum size=2mm},
adversary/.style={rectangle, dashed, draw=blue!100, fill=black!0, thick, minimum size=2mm},
]

\node[square] (a0) at (-3, 1.5) {$a_0$};
\node[square] (a1) at (-1.5, 1.5) {$a_1$};
\node[square] (a2) at (0, 1.5) {$a_2$};
\node[square] (a3) at (1.5, 1.5) {$a_3$};
\node[obs] (d0) at (0, 3) {$X_0$};
\node[obs] (d1) at (0, 0) {$X_1$};
\node[dotarget] (d2) at (3, 1.5) {$Y$};

\node[adversary] (ay') at (3, 3.0) {$a_Y'$};

\draw[->-, thick] (d0) -- (d2);
\draw[->-, thick] (d1) -- (d2);

\draw[->-, thick] (a0) -- (d0);
\draw[->-, thick] (a1) -- (d0);
\draw[->-, thick] (a2) -- (d0);
\draw[->-, thick] (a3) -- (d0);

\draw[->-, thick] (a0) -- (d1);
\draw[->-, thick] (a1) -- (d1);
\draw[->-, thick] (a2) -- (d1);
\draw[->-, thick] (a3) -- (d1);

\draw[->-, thick] (ay') -- (d2);
\end{tikzpicture}
\endminipage\hfill \vline
\minipage{0.6\textwidth}
\centering

\textbf{Ackley-Perturb}
\vskip 0.1in
\begin{tikzpicture}[scale = 0.8, rotate=0, obs/.style={circle, draw=gray!100, fill=black!0, thick, minimum size=7mm},
dotarget/.style={circle, draw=red!100, fill=black!0, thick, minimum size=7mm},
square/.style={rectangle, draw=black!100, fill=black!0, thick, minimum size=2mm},
adversary/.style={rectangle, dashed, draw=blue!100, fill=black!0, thick, minimum size=2mm},
]

\node[square] (a0) at (-4.5, 1.5) {$a_0$};
\node[adversary] (a0') at (-3.0, 1.5) {$a_0'$};

\node[square] (a1) at (-1.5, 1.5) {$a_1$};
\node[adversary] (a1') at (0, 1.5) {$a_1'$};

\node[square] (a2) at (1.5, 1.5) {$a_2$};
\node[square] (a3) at (3.0, 1.5) {$a_3$};
\node[obs] (d0) at (0, 3) {$X_0$};
\node[obs] (d1) at (0, 0) {$X_1$};
\node[dotarget] (d2) at (4.5, 1.5) {$Y$};

\draw[->-, thick] (d0) -- (d2);
\draw[->-, thick] (d1) -- (d2);

\draw[->-, thick] (a0) -- (d0);
\draw[->-, thick] (a1) -- (d0);
\draw[->-, thick] (a2) -- (d0);
\draw[->-, thick] (a3) -- (d0);

\draw[->-, thick] (a0) -- (d1);
\draw[->-, thick] (a1) -- (d1);
\draw[->-, thick] (a2) -- (d1);
\draw[->-, thick] (a3) -- (d1);

\draw[->-, thick] (a0') -- (d0);
\draw[->-, thick] (a1') -- (d0);
\draw[->-, thick] (a0') -- (d1);
\draw[->-, thick] (a1') -- (d1);

\end{tikzpicture}
\endminipage\hfill  

\begin{center}
\caption{The DAGs corresponding to each task in the function networks experiments.}
\label{fig:task_dags}
\end{center}
\vskip -0.2in
\end{figure*}
\raggedbottom

\begin{figure}[t]
    \centering
    \begin{subfigure}[b]{0.3\columnwidth}
        \centering\includegraphics[width=\columnwidth]{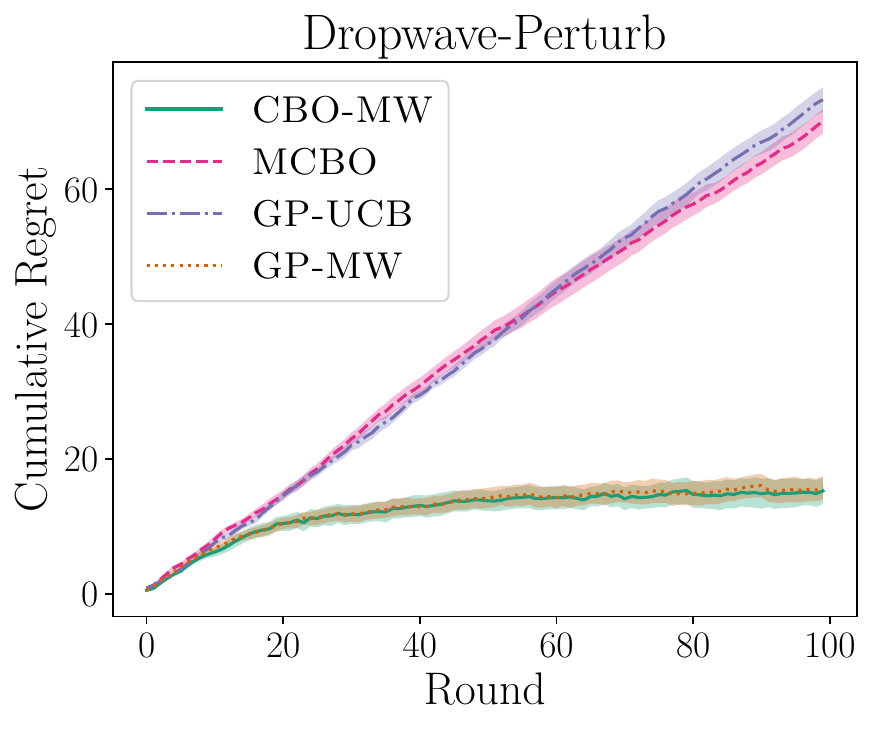}
        \caption{}
     \end{subfigure}
       \begin{subfigure}[b]{0.3\columnwidth}
        \centering\includegraphics[width=\columnwidth]{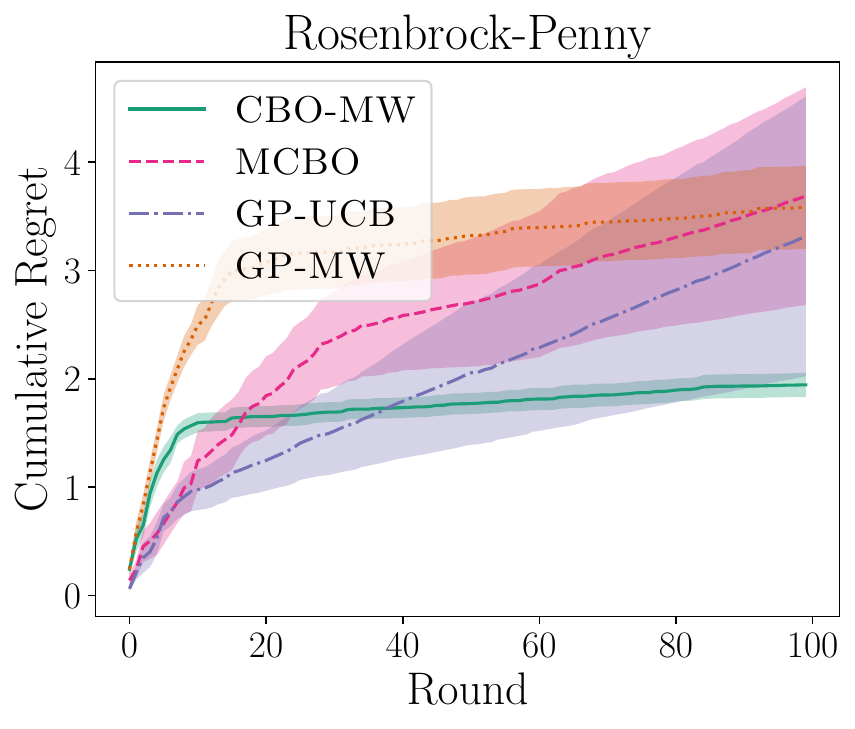}
        \caption{}
     \end{subfigure}
        \begin{subfigure}[b]{0.3\columnwidth}
        \centering\includegraphics[width=\columnwidth]{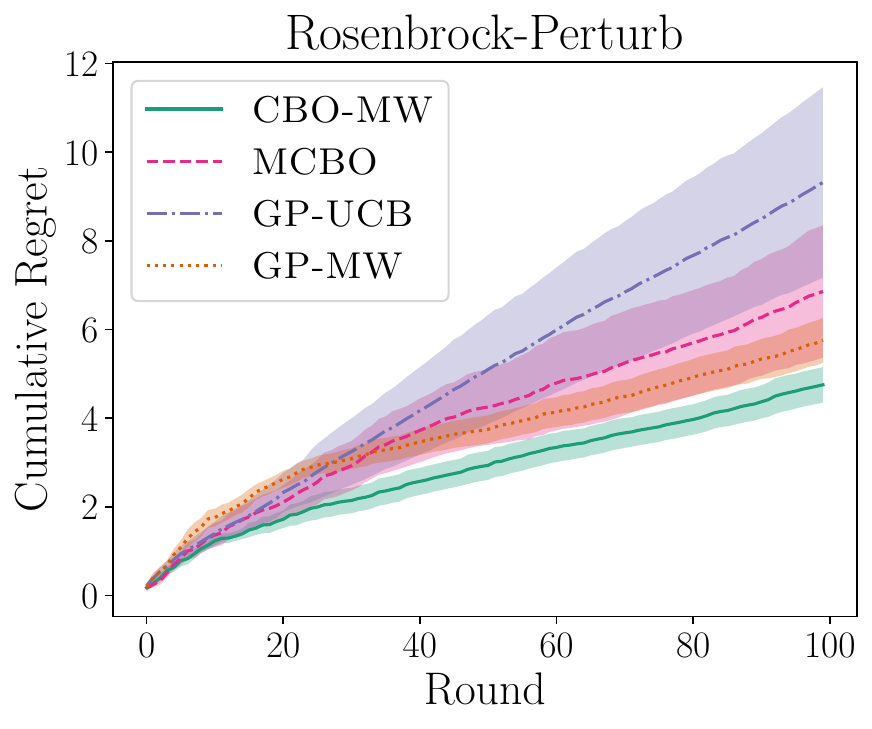}
        \caption{}
     \end{subfigure}

     \begin{subfigure}[b]{0.3\columnwidth}
        \centering\includegraphics[width=\columnwidth]{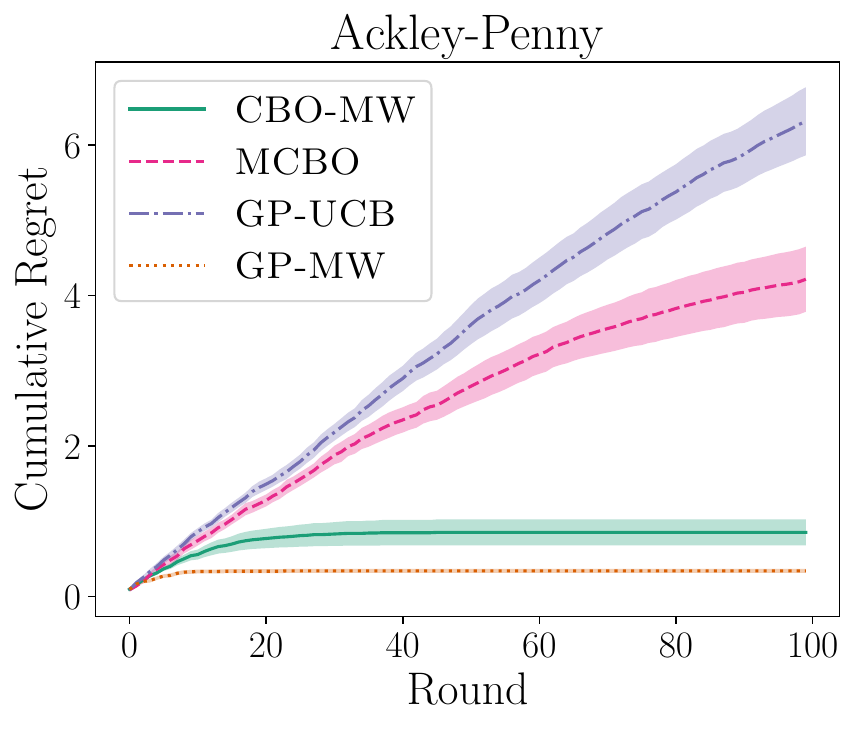}
        \caption{}
     \end{subfigure}
       \begin{subfigure}[b]{0.3\columnwidth}
        \centering\includegraphics[width=\columnwidth]{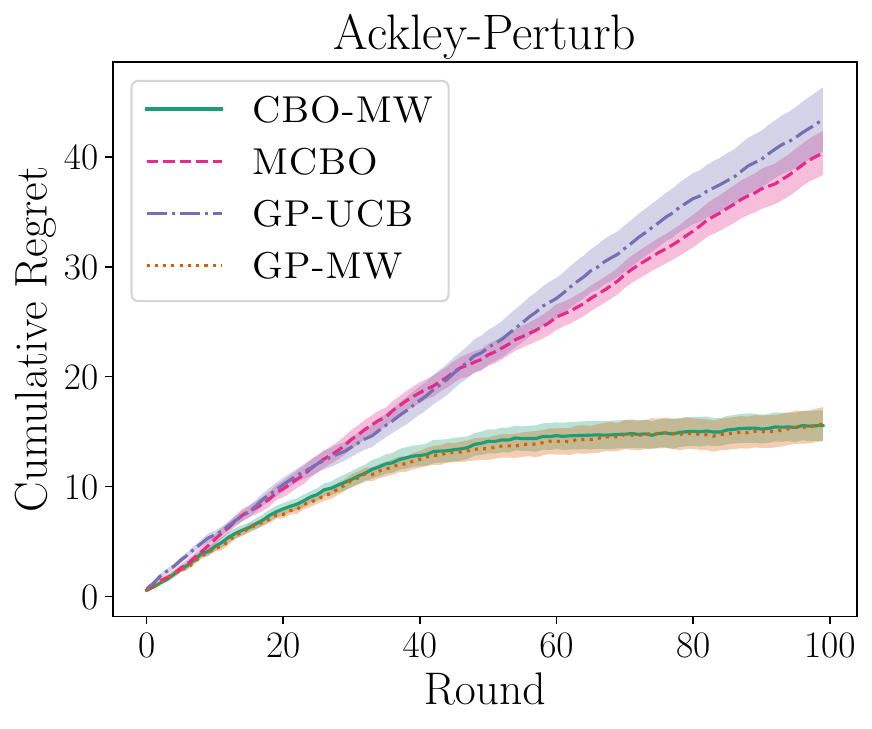}
        \caption{}
     \end{subfigure}
     \caption{
     Regret plots for the function networks not already shown in \cref{fig:fn}.} 
    \label{fig:app_fn}
\end{figure}

\cref{fig:app_fn} shows the regret plots for environments not already shown in the main paper. As discussed, we find that standard non-adversarial BO algorithms often fail to achieve sublinear regret and that \alg is often the most sample efficient compared to \gpmw. Only on Ackley-Penny is \gpmw obtaining a lower regret. This can be understood by our theory. Ackley-Penny is not at all sparse. That is, $\degree \approx m$. Our \cref{thm:1} suggests that most improvement will come in high dimensional settings with sparse graphs. This is made clear by the superior performance of \alg on the Alpine2 environments. 

Here we systematically list the SCM for each environment. 

\paragraph{Dropwave-Penny} $\a \in [0, 2]^2$, $\a'\in[-1, 1]^1$. We have
$$X_0 = \norm{\a},$$
$$Y = \frac{\cos(3X_0)}{2+0.5X_0^2}a_0'.$$
This is the original Dropwave from \citet{astudilloBayesianOptimizationFunction2021} but with a modified input space and a matching pennies dynamic on the final node. 

\paragraph{Dropwave-Perturb} $\a \in [-10.24, 10.24]^2$, $\a'\in[-10.24/5, 10.24/5]^1$. Like many Perturb systems, the adversary has a smaller domain than the agent to prevent it from being too strong. We have
$$X_0 = \norm{\left(\begin{bmatrix}
           a_0 - a_0' \\
           a_1 
         \end{bmatrix} 
         \right)},$$

$$Y = \frac{\cos(3X_0)}{2+0.5X_0^2}.$$
This is the original Dropwave from \citet{astudilloBayesianOptimizationFunction2021} but with one of the actions being perturbed by the adversary. 

\paragraph{Alpine-Penny} $\a \in [0, 10]^4$, $\a'\in[1, 11]^1$. We have
$$X_0 = -\sqrt{a_0}\sin (a_0)$$
For nodes $X_i$ with an adversary parent we have
$$X_i = -\sqrt{a_i'}\sin (a_i') X_{i-1},$$
and for nodes influenced by the agent we have
$$X_i = -\sqrt{a_i}\sin (a_i) X_{i-1}.$$
This is the original Alpine2 from \citet{astudilloBayesianOptimizationFunction2021} but with an adversary controlling one of the nodes instead of the agent. We shift that adversary's action space so that they cannot $0$ the output with a fixed action. 

\paragraph{Alpine-Perturb} $\a \in [0, 10]^4$, $\a'\in[0, 2]^3$. Let $\bar{a}_i = a_i + a_i'$ for $i$ when $X_i$ has an adversarial action input, and $\bar{a}_i = a_i$ otherwise. We have
$$X_0 = -\sqrt{\bar{a}_0}\sin (\bar{a}_0),$$
$$X_i = -\sqrt{\bar{a}_i'}\sin (\bar{a}_i) X_{i-1}.$$

\paragraph{Rosenbrock-Penny} $\a \in [0, 1]^4$, $\a'\in[0,1]^2$. We have
$$X_0 = -100 (a_1-a_0^2)^2 - (1-a_0)^2 + 10,$$
$$X_i = \left( -100 (a_{i+1}-a_{i}^2)^2 - (1-a_i)^2 + 10 + X_{i-1} \right) \bar{a}_i',$$
where $\bar{a}_i' = 1$ if there is no adversary over node $i$ and otherwise $\bar{a}_i' = a_i'. $

\paragraph{Rosenbrock-Perturb} $\a \in [-2, 2]^4$, $\a'\in[-1,1]^2$. Let $\bar{a}_i = a_i + a_i'$ for $i$ when $X_i$ has an adversarial action input, and $\bar{a}_i = a_i$ otherwise. We have
$$X_0 = -100 (\bar{a}_1-\bar{a}_0^2)^2 - (1-\bar{a}_0)^2 + 10,$$
$$X_i = -100 (\bar{a}_{i+1}-\bar{a}_{i}^2)^2 - (1-\bar{a}_i)^2 + 10 + X_{i-1}.$$

\paragraph{Ackley-Penny} $\a \in [-2, 2]^4$, $\a'\in[-1, 1]^1$. Let $\bar{a}_i = a_i + a_i'$ for $i$ when $X_i$ has an adversarial action input, and $\bar{a}_i = a_i$ otherwise. We have
$$X_0 = \frac{1}{4} \sum_i \bar{a}_i^2,$$
$$X_1 = \frac{1}{4} \sum_i \cos(2 \pi \bar{a}_i),$$
$$Y = 20 \exp(-0.2\sqrt{X_0}) + e^{X_1}.$$

\paragraph{Ackley-Perturb} $\a \in [-2, 2]^4$, $\a'\in[-1, 1]^2$. We have
$$X_0 = \frac{1}{4} \sum_i a_i^2,$$
$$X_1 = \frac{1}{4} \sum_i \cos(2 \pi a_i),$$
$$Y = 20 a_0' \exp(-0.2\sqrt{X_0}) + e^{X_1}.$$
This is the original Ackley from \citet{astudilloBayesianOptimizationFunction2021} but with a matching pennies dynamic on the final node.

\subsection{Shared Mobility System} 

We use the same SMS simulator as \citet{sessa21online} and thus we largely refer to this regarding simulator details, unless otherwise specified. The simulator uses real demand data amalgamated from several SMS sources in Louisville Kentucky \cite{LouisvilleKentuckyOpen}. We treat the system as a single SMS where all transport units are identical. A single trip taken in the Louisville data at a specific time is treated as a single unit of demand in the simulator. The demand is fulfilled if the location of the demand (where the trip started in the dataset) is within a certain distance (0.8km Euclidean distance) of a depot containing at least one bike. If the demand for a trip is satisfied, a single trip starting from the depot is counted and the bike is instantaneously transported to the nearest depot to the end location of the trip in the dataset. The simulator's use of real trip data means that the geographical and temporal distribution of demand, and its relation to the weather, is realistic.

The depots are not in the original trip data but constructed from the trip data using a similar procedure to \citet{sessa21online}. The start location for every trip taken over the year is clustered via k-means, and then clusters that are very close together are removed. This left 116 depots where bikes can be left. We consider a system with 40 bikes, which are distributed initially by 5 trucks that place all bikes in that truck at the same depot. 

We further allocate depots to regions. These are constructed by using trip data across the whole year, and using a heuristic that clusters depots into regions so that there is a low chance that any given trip starts in one region and ends in another. As shown in \cref{fig:subalgo} this leads to nearby depots often being in the same region, which is reasonable. We get $15$ regions $R_{1}$ to $R_{15}$.

Agent action $\a_i$ is a one-hot $116$-length vector for which depot truck $i$'s bikes are placed at. 

We obtain 3 measurements for $\a'_{t}$ at the end of each day $t$. This is the day's average temperature, rainfall, and total demand (including unmet demand). These are part of $\a'$ because they are out of our agent's control and not sampled \iid across days. The agent must adaptively respond to these observations over time. Weather data is the real weather from that day obtained from \citet{LocalWeatherForecast}.

Observations $\s_{i}^{r}$ give the number of bikes at day start in depot number $i$ within region $r$. $\s_{r}$ is the total fulfilled trips that started in region $r$. Reward $Y$ is then the total trips in a day. All observations are normalized to ensure they are fixed in $[0, 1]$. 

In \citet{sessa21online}, 2 separate GPs are used to model weekday and weekend demand. For simplicity we use a simulator that skips weekends, and therefore we don't need a separate model for the two types of day. No matter the algorithm used, the first $10$ days of the year use the \rand strategy to gather exploratory data to initialize the GP models for the $\s_{r}$. 

The graph used by \subalg is given in \cref{fig:subalgo}(b). The relationship between the bike allocations and bike distribution at day start $\{X^r_{:}\}_{r=R_1}^{R_{15}}$ is a fixed known function. The mechanism from the starting bike distribution in a region $r$ ($X^r_{:}$), adversary actions $\a'$ (weather and demand) and total trips in region $r$ over the day ($X_r$) is an unknown function that must be learnt for each $r$. The relationship between total trips $Y$ and its parents is known (sum over parents). For this kind of graph the output of the Causal UCB Oracle (\cref{alg:oracle}) will always set $\eta=\mathbf 1$, because more trips in any region results in more total trips. For computational efficiency we therefore implement the Causal UCB Oracle to set $\eta$ to $\mathbf 1$ instead of optimising over $\eta$.

\end{document}